\newtheorem{theorem}{Theorem}[]
\useunder{\uline}{\ul}{}
  \providecommand\BibTeX{{%
    \normalfont B\kern-0.5em{\scshape i\kern-0.25em b}\kern-0.8em\TeX}}}
\begin{document}

\title{Exploring the Distributed Knowledge Congruence in Proxy-data-free Federated Distillation}


\author{Zhiyuan Wu}
\email{wuzhiyuan22s@ict.ac.cn}
\affiliation{%
  \institution{Institute of Computing Technology, Chinese Academy of Sciences}
  \streetaddress{No.6, Academy of Science South Road, Zhongguancun, Haidian District}
  \city{Beijing}
  \country{China}
  \postcode{100086}
}
\affiliation{%
  \institution{University of Chinese Academy of Sciences}
  \streetaddress{No.19, Yuquan Road, Shijingshan District, Beijing}
  \city{Beijing}
  \country{China}
  \postcode{100049}
}

\author{Sheng Sun}
\email{sunsheng@ict.ac.cn}
\author{Yuwei Wang}
\authornote{Corresponding author.}
\email{ywwang@ict.ac.cn}
\author{Min Liu}
\email{liumin@ict.ac.cn}
\author{Quyang Pan}
\email{lightinshadow1110@gmail.com}
\affiliation{%
  \institution{Institute of Computing Technology, Chinese Academy of Sciences}
  \streetaddress{No.6, Academy of Science South Road, Zhongguancun, Haidian District}
  \city{Beijing}
  \country{China}
  \postcode{100086}
}

\author{Junbo Zhang}
\email{msjunbozhang@outlook.com}
\affiliation{%
  \institution{JD iCity, JD Technology}
  \city{Beijing}
  \country{China}
}

\affiliation{%
  \institution{JD Intelligent Cities Research}
  \country{China}
}

 \quad

\author{Zeju Li}
\email{lizeju0727@gmail.com}
\affiliation{%
  \institution{Beijing University of Posts and Telecommunications}
  \city{Beijing}
  \country{China}
}

\author{Qingxiang Liu}
\email{qingxiangliu737@gmail.com}
\affiliation{%
  \institution{Institute of Computing Technology, Chinese Academy of Sciences}
  \streetaddress{No.6, Academy of Science South Road, Zhongguancun, Haidian District}
  \city{Beijing}
  \country{China}
  \postcode{100086}
}
\affiliation{%
  \institution{University of Chinese Academy of Sciences}
  \streetaddress{No.19, Yuquan Road, Shijingshan District}
  \city{Beijing}
  \country{China}
  \postcode{100049}
}

\thanks{This work was supported by the National Key Research and Development Program of China (2021YFB2900102), the National Natural Science Foundation of China (62072436), the Beijing Natural Science Foundation (4212021), and the Beijing Science and Technology Project (Z211100004121008) }

\renewcommand{\shortauthors}{Z. Wu et al.}


\begin{abstract}
  Federated learning (FL) is a privacy-preserving machine learning paradigm in which the server periodically aggregates local model parameters from clients without assembling their private data.
    Constrained communication and personalization requirements pose severe challenges to FL. Federated distillation (FD) is proposed to simultaneously address the above two problems, which exchanges knowledge between the server and clients, supporting heterogeneous local models while significantly reducing communication overhead. However, most existing FD methods require a proxy dataset, which is often unavailable in reality. 
    A few recent proxy-data-free FD approaches can eliminate the need for additional public data, but suffer from remarkable discrepancy among local knowledge due to client-side model heterogeneity, leading to ambiguous representation on the server and inevitable accuracy degradation.
    To tackle this issue, we propose a proxy-data-free FD algorithm based on distributed knowledge congruence (FedDKC). FedDKC leverages well-designed refinement strategies to narrow local knowledge differences into an acceptable upper bound, so as to mitigate the negative effects of knowledge incongruence. 
    Specifically, from perspectives of peak probability and Shannon entropy of local knowledge, we design kernel-based knowledge refinement (KKR) and searching-based knowledge refinement (SKR) respectively, and theoretically guarantee that the refined-local knowledge can satisfy an approximately-similar distribution and be regarded as congruent.
    Extensive experiments conducted on three common datasets demonstrate that our proposed FedDKC significantly outperforms the state-of-the-art on various heterogeneous settings while evidently improving the convergence speed.
\end{abstract}


\begin{CCSXML}
<ccs2012>
<concept>
<concept_id>10010147.10010919</concept_id>
<concept_desc>Computing methodologies~Distributed computing methodologies</concept_desc>
<concept_significance>500</concept_significance>
</concept>
<concept>
<concept_id>10010147.10010257</concept_id>
<concept_desc>Computing methodologies~Machine learning</concept_desc>
<concept_significance>500</concept_significance>
</concept>
</ccs2012>
\end{CCSXML}

\ccsdesc[500]{Computing methodologies~Distributed computing methodologies}
\ccsdesc[500]{Computing methodologies~Machine learning}

\maketitle

\section{Introduction}
\label{introduction}
Federated learning (FL) is a privacy-preserving machine learning paradigm that allows participants to collaboratively train machine learning (ML) models while keeping the data decentralized.
Owing to the advantages of protecting data privacy and boosting model accuracy, FL has been widely applied to a variety of applications, such as medical treatment \cite{medical1,medical2}, financial risk management \cite{kawa2019credit}, and recommendation systems \cite{tan2020federated,jalalirad2019simple}.
Conventional parameter-aggregation-based FL frameworks \cite{mcmahan2017communication,li2020federated} periodically aggregate local model parameters uploaded by distributed clients on the server-side and then broadcast the updated global model to clients until model convergence, aiming to improve the trained models' generic performance. 
However, such methods face two challenges to tackle.
On the one hand, frequently exchanging model parameters over the training process leads to an excessive communication burden;
on the other hand, homogeneous models among clients conflict with client heterogeneity in terms of data distribution and system configuration. 
The above-mentioned defects easily result in drastic performance drops and hinder the actual deployment of FL.

Motivated by the challenges above, federated distillation (FD) is proposed via extending knowledge distillation technology into FL frameworks \cite{hinton2015distilling,anil2018large}, in which model outputs (called knowledge) in replacement of model parameters are exchanged between clients and the server.
Since the size of knowledge is smaller than that of model weights by many orders of magnitude and knowledge is independent of model architectures, FD can maintain low communication overhead while allowing to design personalized models for individual clients, which is deemed as a communication-efficient and heterogeneous-allowable FL paradigm.

Most existing FD methods \cite{lin2020ensemble,cheng2021fedgems,itahara2020distillation} require a globally-shared proxy dataset to extract knowledge, based on which the server and clients can conduct co-distillation to narrow their representation gap.
Since the proxy dataset needs to be cautiously gathered and is not available in reality, a few efforts are made to explore FD frameworks in a proxy-data-free manner.
Typical methods break the dependence on proxy data via iteratively exchanging additional information between clients and the server, such as a generator \cite{zhu2021data} or local-global models \cite{lee2021preservation,pan2021global} to realize distillation.
However, such methods remarkably increase communication overhead because of exchanging model parameters.
In order to maintain communication efficiency in proxy-data-free FD, \cite{he2020group} proposes a novel feature-driven FD framework, which leverages embedded features in the replacement of proxy data to extract knowledge and achieve workable client-server co-distillation with little influence on communication efficiency. 
Nevertheless, this approach suffers from a non-negligible problem of accuracy degradation, 
since heterogeneous local models tend to exhibit a significant knowledge discrepancy without the assistance of a proxy dataset.
Such knowledge incongruence will lead to unstable and incorrect distillation, which is undoubtedly harmful for the FD process.


To alleviate the accuracy drop caused by knowledge discrepancy among clients, we investigate proxy-data-free FD from a novel perspective: refinement-based distributed knowledge congruence among heterogeneous clients.
We propose a feature-driven FD algorithm based on distributed knowledge congruence (namely FedDKC), in which we refine distributed local knowledge from clients to satisfy a similar distribution based on our well-designed congruence-refinement strategies during server-side distillation. 
Specifically, we consider knowledge discrepancy from two perspectives: the peak probability and the Shannon entropy of knowledge, and propose kernel-based knowledge refinement (KKR) and searching-based knowledge refinement (SKR) strategies, respectively.
On this foundation, the server can learn unbiased knowledge representations and obtain more precise global knowledge based on relatively-congruent local knowledge. In turn, clients can achieve better performance with transferred global knowledge.
As far as we know, \textbf{this paper is the first work to consider knowledge incongruence among heterogeneous clients in proxy-data-free federated distillation.} Our proposed FedDKC can significantly boost training accuracy while maintaining communication efficiency based on distributed knowledge congruence.

The main contributions of this paper are summarized as follows:
\begin{itemize}
	\item
	We propose a communication-efficient and accuracy-guaranteed FD algorithm (namely FedDKC), where local knowledge discrepancy among clients with heterogeneous models is narrowed through skillfully refining to a similar probability distribution.
	In FedDKC, the server can learn unbiased knowledge representation and help clients promote local training accuracy. 
	\item
	We design KKR and SKR strategies severally for two kinds of knowledge incongruence.
	The KKR strategy refines the peak probability of clients' local knowledge to the given limitation, and the SKR strategy makes the Shannon entropy of the refined-local knowledge not exceed the target range.
	We further prove that the knowledge discrepancy between arbitrating clients satisfies an acceptable theoretical upper bound when adopting both strategies.
	\item
    We conduct empirical experiments on MNIST, CIFAR-10, and CINIC-10 datasets with heterogeneous client model architectures and multiple data Non-IID settings. Results demonstrate that our proposed FedDKC outperforms the related state-of-the-art with accuracy improvements and faster convergence on individual clients.
\end{itemize}

\begin{table}[!t]
	\centering
	\setlength\extrarowheight{1.3pt}
	\caption{Main notations with descriptions.}
	\begin{tabular}{c|c}
		\hline
		\textbf{Notation} & \textbf{Description} \\ \hline
		$K$        & The number of clients               \\
        $C$        & The number of classes \\
		$\mathcal{D}^k$      & The private dataset of client $k$     \\
        $(X^k,y^k)$        & The data and labels in $\mathcal{D}^k$\\
        $N^k$      & The number of samples in $\mathcal{D}^k$\\
		$\mathcal{P}$ & The universal set of probability space in $C$ classes\\
		$W^S$ & The global model weights of the server \\
		$W^k$ & The local model weights of client $k$ \\
		$W^k_e$ & The feature extractor weights of client $k$\\
		$W^k_p$ & The predictor weights of client $k$\\
		$z_{X^k}^S$ & The global knowledge\\
		$z^k_{X^k}$ & The local knowledge from client $k$\\
		$p_{X^k}^S$ & The softmax-normalized global knowledge \\
		$p_{X^k}^k$ & The softmax-normalized local knowledge from client $k$\\
		${H^k}$ & The extracted features from client $k$\\
		$\theta$ & The parameter of the auxiliary mapping in $\psi(\theta;\cdot)$\\
		$m$ & The index of the maximum element in $p_{X^k}^k$ \\
		$t$ & The input scaling parameter of kernel function \\
		$T$ & The hyper-parameter of target peak probability\\	
		$E$ & The hyper-parameter of target Shannon entropy  \\
		$\tau(\cdot)$ & The softmax mapping \\
		$L_{CE}(\cdot)$ & The cross-entropy loss function \\
		$L_{sim}(\cdot)$ & The knowledge-similarity loss function \\
		$\max(\cdot)$ & The maximum function \\
		$\varphi(\cdot)$ & The refinement mapping over distributed knowledge \\
		$\sigma ( \cdot )$ & The kernel function in KKR \\
		$H(\cdot)$& The Shannon entropy function \\
		$\psi(\cdot)$ & The auxiliary mapping in SKR\\
		\hline
	\end{tabular}
	\label{main-notation}
\end{table}

\section{Preliminary and Motivation}
This section provides the fundamental process of proxy-data-free FD, and then emphasizes our motivation on distributed knowledge congruence.
Detailed notations and descriptions are given in Table \ref{main-notation}.

\subsection{Basic Process of Proxy-data-free Federated Distillation}
Without loss of generality, we consider the classification task in FL setting with $C$ categories, and let $\mathcal{C}=\{1,2,......,C\}$.
The FD system consists of a large-scale server and $K$ heterogeneous clients.
Let $\mathcal{K}= \{ 1,2,......,K\} $ denote the set of clients.
Each client $k$ owns a private dataset $\mathcal{D}^k=\{{X^k},{y^k}\}$ with $N^k$ samples, where $X^k$ and $y^k$ denotes the set of input data and corresponding labels, respectively. Moreover, data distributions among clients are not identically and independently distributed (Non-IID) in our setting.

We assume that each client owns heterogeneous model architecture, determined by the computation capability and training requirements of individual clients in reality.
Referring to FedGKT \cite{he2020group}, we consider the feature-driven FD framework, which can achieve heterogeneous model training while guaranteeing communication efficiency.
In this framework, the local model at each client includes a small feature extractor and a large predictor, while the global model at the server only contains a large predictor.
Let $W_e^k$ and $W_p^k$ be the feature extractor's weights and the predictor's weights of client $k$, respectively.
Moreover, we denote $W^k=\{W_e^k \cup W_p^k\}$ as the weights of the local model at client $k$, and denote $W^S$ as the weights of the global model on the server.
Let $f(W^*;\cdot)$ denote the nonlinear function determined by weights $W^\ast$, where $W^\ast\in\{\bigcup\nolimits_{i=1}^K W^k\cup W^S\}$.
In addition, we define the extracted features of client $k$ as ${H^k} = f(W_e^k;{X^k})$, the logits of client $k$ as $z_{{X^k}}^k = f(W_p^k;{H^k})$, and the logits of the server as $z_{{X^k}}^S = f({W^S};{H^k})$.
Specifically, the logits of clients are called local knowledge, and the logits of the server are called global knowledge.

The whole process of proxy-data-free FD can be divided into multiple rounds. Each round consists of two stages: local distillation, where each client updates its local model based on global knowledge transferred back from the server; global distillation, where the global model on the server performs knowledge distillation based on uploaded local knowledge from clients.
The detailed processes are illustrated as follows:

\textbf{\textit{1) Local Distillation Process}}: Each client $k$ updates its feature extractor $W_e^k$ and predictor $W_p^k$ according to the received global knowledge $z_{X^k}^S$, aiming to minimize the combination of cross-entropy loss $L_{CE}(\cdot)$ and knowledge-similarity loss $L_{sim}(\cdot)$, which can be given by:
\begin{equation}
    \label{client-distill}
    \mathop {\arg \min }\limits_{{W^k}} L_C^k: =  {L_{CE}}(p_{{X^k}}^k,{y^k}) + \beta   \cdot {L_{sim}}(p_{{X^k}}^k,p_{{X^k}}^S),
\end{equation}
where $L_C^k(\cdot)$ represents the loss function of client $k$, and $\beta$ is the hyper-parameter for weighting the effect of knowledge similarity loss. 
$p_{X^k}^S=\tau(z_{X^k}^S)$ denotes the softmax-normalized global knowledge that is broadcast to client $k$, and $p_{X^k}^k=\tau(z_{X^k}^k)$ is the softmax-normalized local knowledge from client $k$, in which $\tau(\cdot)$ is the softmax mapping.
$L_{sim}(\cdot)$ measures the similarity of normalized local and global knowledge and takes the Kullback-Leibler divergence by default.
After local training, client $k$ generates the extracted features $H^k$ and the local knowledge ${z_{{X^k}}^k}$ based on its updated feature extractor and predictor, i.e., ${{H}^k} = f(W_e^k;{X^k})$, and $z_{{X^k}}^k = f(W_p^k;{H^k})$. 
Then, client $k$ uploads its obtained features $H^k$, local knowledge $z_{{X^k}}^k$ and corresponding labels $y^k$ to the server for performing global distillation.

\textbf{\textit{2) Global Distillation Process}}:
After receiving the local knowledge from all clients, the server conducts global distillation process,
which updates the global model $W^S$ by optimizing the following objective:
\begin{equation}
\label{server-distilll}
    \mathop {\arg \min }\limits_{{W^S}} {L_S}: =  {L_{CE}}(p_{{X^k}}^S,{y^k}) + \beta  \cdot {L_{sim}}(p_{{X^k}}^S,p_{{X^k}}^k),
\end{equation}
where $L_S(\cdot)$ denotes the server-side loss function.
After distillation, the server generates global knowledge $z_{X^k}^S$ for each client $k$ using the updated global model $W^S$ and the uploaded local features $H^k$, i.e., $z_{X^k}^S=f(W^S;H^k)$. 
Then, $z_{X^k}^S$ is broadcast to client $k$. At this point, this round is completed, and the next round begins.

During the above process, only extracted features $H^k$ and local-global knowledge $\{z_{X^k}^k$, $z_{X^k}^S\}$ are exchanged between the server and client $k$.
Since the sizes of such information are significantly smaller compared with model weights, this feature-driven FD manner can achieve client-server co-distillation under model heterogeneity with slight communication overhead.

\begin{figure}[t!]
	\centering
	\includegraphics[width=0.75 \textwidth]{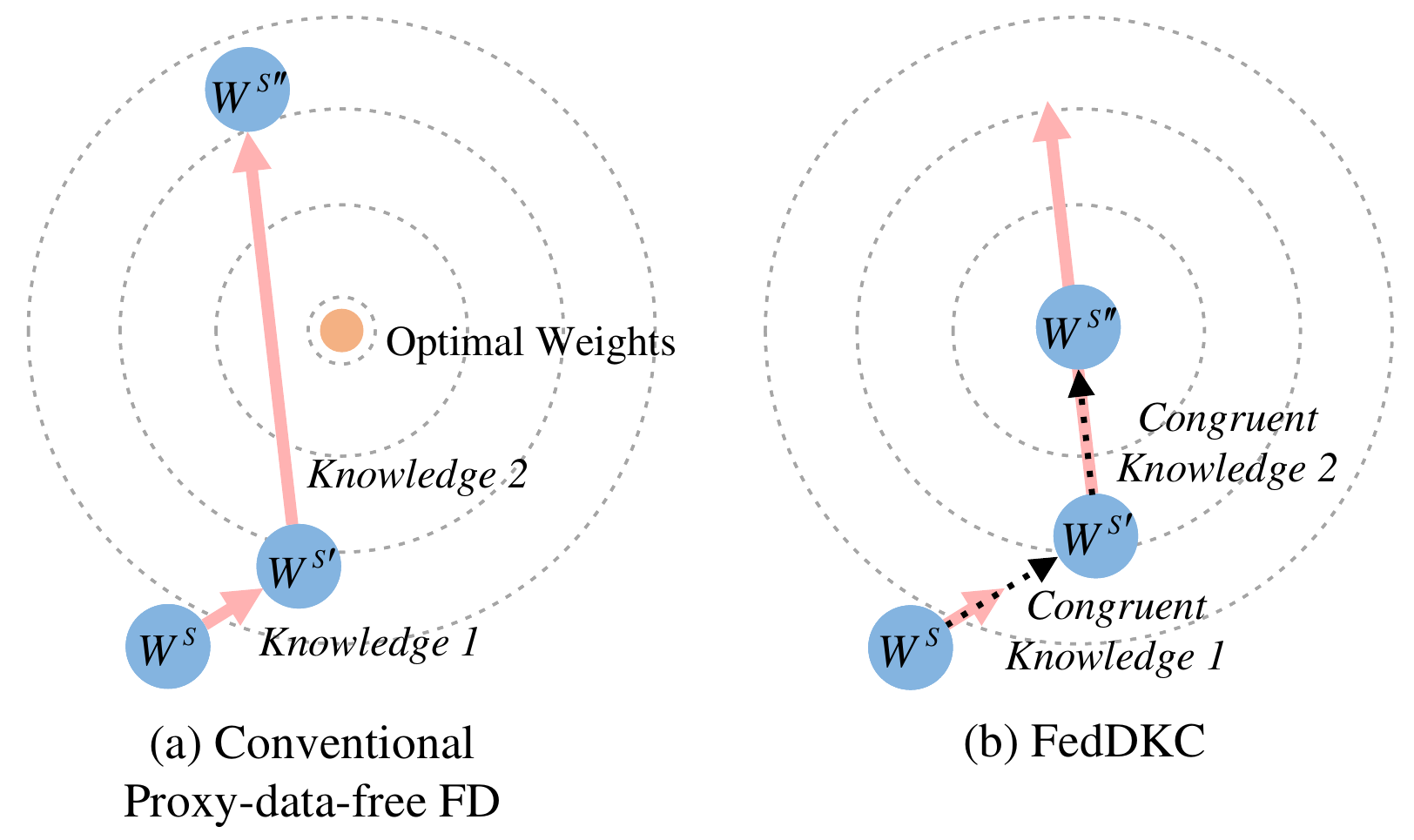}
	\caption{The effect of knowledge congruence on global model convergence.} 
	\label{weight} 
\end{figure}

\subsection{Motivation of Distributed Knowledge Congruence}
\label{motivation}
\quad \textit{\textbf{1) Existing Drawback}}:
Affected by both data heterogeneity and model heterogeneity, existing proxy-data-free FD methods are difficult to get similarly-distributed local knowledge from multiple clients.
On the one hand, data heterogeneity leads to diverse label distributions among clients, and the local model on each client tends to learn biased representations based on an independently sampled space, which favors the samples with higher frequency to promote local fitting degree. On the other hand, model heterogeneity can further exacerbate these biases since larger models tend to possess superior representation capability and generate knowledge with higher numerical differences and vice versa.

Furthermore, according to Eq. (\ref{server-distilll}), we can draw that knowledge incongruence has a non-negligible influence on server distillation since the global model needs to be optimized based on the knowledge similarity between clients and the server.
Due to the aforementioned problem, if straightforwardly learning the incongruent knowledge from clients, the server will learn an ambiguous or a biased representation and easily fail to converge smoothly, which cannot acquire approximate-optimal global knowledge and affects the training accuracy of clients in turn.
Whereas existing methods \cite{lin2020ensemble,li2019fedmd,cheng2021fedgems,itahara2020distillation,lee2021preservation,yao2021local,he2020group}, summarized in Table \ref{cmp-methods},
dismiss the ill effect of incongruent knowledge among clients, which leads to severe performance degradation.
\textbf{Fig. \ref{weight} shows the effect of knowledge congruence on global model convergence, where the red arrows indicate the direction of the negative gradient obtained by distillation on softmax-normalized local knowledge, and black arrows indicate that obtained by distillation on the refined-local knowledge.}
As shown in Fig. \ref{weight}(a), the local knowledge from a single client will contribute to an optimized direction for the global model.
However, knowledge incongruence among heterogeneous clients contributes to biased optimization and frequent fluctuation in the convergence direction.
These negative effects cause the actual result to deviate from the optimal one.

\begin{table}[t!]
	\setlength\extrarowheight{1.6pt}
	\caption{
	Comparison of FedDKC with related state-of-the-art methods. Proxy-data-free, allow model heterogeneity, efficient communication, knowledge refinement and knowledge distribution among heterogeneous clients are respectively denoted as PF, AMH, EC, KR, KDHC in this table.
	}
	\centering
	\label{cmp-methods}
	\begin{tabular}{
			>{}c |
			>{}c |
			>{}c |
			>{}c |
			>{}c |
			>{}c
		}
		\hline
		\multicolumn{1}{l|}{Method} &
		\multicolumn{1}{l|}{ PF } &
		\multicolumn{1}{l|}{ AMH }& 
            \multicolumn{1}{l|}{ EC } & 
            \multicolumn{1}{c|}{KR} & 
            \multicolumn{1}{c}{KDHC} \\ \hline
		
		FedDF \cite{lin2020ensemble} &\XSolidBrush& \CheckmarkBold & \XSolidBrush & Average&Noisy \\
		FedMD \cite{li2019fedmd}&\XSolidBrush                                               & \CheckmarkBold &\CheckmarkBold                                                                & Average                                                       & Noisy                                                                                    \\
		FedGEM \cite{cheng2021fedgems}&\XSolidBrush &\CheckmarkBold & \CheckmarkBold & None & Incongruent \\
		DS-FL \cite{itahara2020distillation} & \XSolidBrush                                            & \CheckmarkBold&\CheckmarkBold                                                                &  Entropy Reduction                                               & Incongruent                                                                              \\
		FedLSD \cite{lee2021preservation}&\CheckmarkBold& \XSolidBrush & \XSolidBrush & Soften & Incongruent \\
		FedGKD \cite{yao2021local}&\CheckmarkBold                                              & \XSolidBrush&\XSolidBrush                                                                 & Historical Information                                        & Incongruent                                                                                        \\
		FedGEN \cite{zhu2021data}&\CheckmarkBold& \CheckmarkBold & \XSolidBrush & None & Incongruent \\
		FedGKT \cite{he2020group}&\CheckmarkBold                                             & \CheckmarkBold&\CheckmarkBold                                                                & None                                                          & Incongruent                                                                              \\
		 \hline
		\textbf{FedDKC} & \textbf{\CheckmarkBold}                                     &\textbf{\CheckmarkBold} & \textbf{\CheckmarkBold}                                                       & \textbf{KKR/SKR}                               & \textbf{Congruent}                                                                       \\ \hline
	\end{tabular}
\end{table}

\textit{\textbf{2) Insight Formulation}}:
Through the above analysis, we assert that congruent local knowledge among clients is essential for optimizing the global model and realizing stabilized convergence.
Therefore, we expect to narrow the distribution differences of the original local knowledge among clients through knowledge refinement, aiming to make refined-local knowledge satisfy an approximate distribution constraint. 
Based on congruent knowledge during server-side distillation, the global model can be steadily updated towards the correct convergence direction, as shown in Fig. \ref{weight}(b).
Guided by the above insight, we propose the FedDKC algorithm, and the detailed comparison between FedDKC and related state-of-the-art methods is shown in Table \ref{cmp-methods}. 
Compared with existing proxy-data-free FD methods, our proposed FedDKC allows both model heterogeneity among clients and high communication efficiency, and pioneers to leverage knowledge congruence to promote the distillation performance.

\section{Federated Distillation based on Distributed Knowledge Congruence}
In this section, we first introduce our proposed FedDKC algorithm and its fundamental idea. 
Then, knowledge refinement strategies including kernel-based knowledge refinement (KKR) and searching-based knowledge refinement (SKR) are detailly explained. 
Finally, we provide the formal description of FedDKC.

\begin{figure*}[t]
	\centering
	\includegraphics[width=1.0 
	\textwidth]{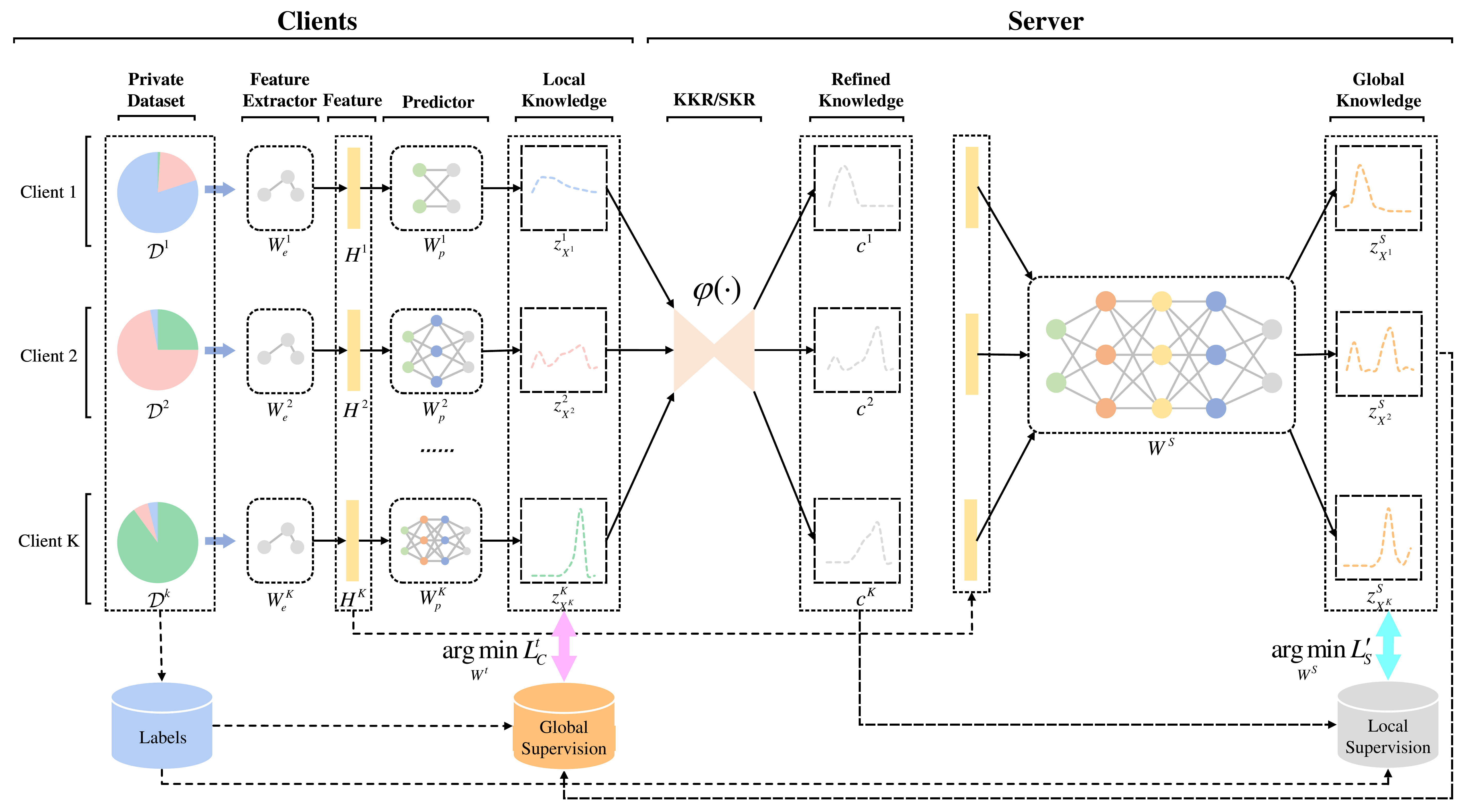}
	\caption{The overall framework of FedDKC.}
	\label{framework} 
\end{figure*}

\subsection{Framework Formulation}
\label{constrains}
Different from previous methods, we commit to achieving a tailored distribution congruence of local knowledge among clients during server-side distillation by narrowing the difference of distributed local knowledge to an acceptable constraint, as shown in Fig. \ref{framework}.
Specifically, we define $dist(\cdot)$ to measure the normalized knowledge distribution.
Taking $z^k_{X^k}$ and $z^l_{X^l}$ as inputs, they are normalized via softmax mapping $\tau(\cdot)$, and the knowledge discrepancy between client $k$ and client $l$ can be represented by $|dist(\tau(z^k_{X^k}))-dist(\tau(z^l_{X^l}))|$.
Affected by data and model heterogeneity among clients, significant discrepancy among the softmax-normalized local knowledge derived by each client is ubiquitous.
Thus, we design knowledge refinement mapping $\varphi (\cdot)$ to refine all local knowledge into a similar distribution and realize approximate congruence of local knowledge.   
Note that the local knowledge after refinement mapping is called refined-local knowledge.

Firstly, we indicate that $\varphi (\cdot)$ should satisfy the following three properties:
\begin{itemize}
    \item 
    \textbf{Probabilistic Projectivity.} For each client $k$, the refined-local knowledge is in probability space, which means that all elements in refined-local knowledge have to be non-negative and add up to 1, i.e.,
    \begin{equation}
        \varphi (z_{X^{k}}^{{k}}) \in \mathcal{P},
    \end{equation}
    where
    \begin{equation}
        \mathcal{P} = \{ Z \in {R^C} \wedge \sum\nolimits_i {{Z_i}}=1  \wedge 0 \le {Z_i} \le 1,\forall i \in \mathcal{C}\} .
    \end{equation}
    This is because the refined-local knowledge in our algorithm is required to exhibit the form of normalized, which is a necessary condition to compute similarity loss with the global knowledge.
    \item
    \textbf{Invariant Relations.} For each client's logits $z^k_{X^k} := (u_1^k,u_2^k,......,u_C^k)$, the refinement mapping $\varphi (\cdot)$ should not change the order of numeric value among all elements in $z^k_{X^k}$, i.e.,
    \begin{equation}
        \varphi (z^k_{X^k})_i \ge \varphi (z^k_{X^k})_j, \forall u_i^k \ge u_j^k,
    \end{equation}
    where $\varphi (z^k_{X^k})_i$ is the $i$-th dimension in $\varphi (z^k_{X^k})$.
    Since the structured information of local knowledge is mainly reflected in the dimensional order relations, knowledge refinement needs to maintain such relations to preserve the original information.
    \item
    \textbf{Bounded Dissimilarity.} After refining, the knowledge discrepancy between arbitrating clients should satisfy an acceptable theoretical upper bound $\varepsilon$, i.e., 
    \begin{equation}
        |dist(\varphi (z_{{X_{{k}}}}^{{k}})) - dist(\varphi (z_{{X_{{l}}}}^{{l}}))| < \varepsilon, \forall k, l\in \mathcal{K}.
    \end{equation}
    This property ensures that the refined-local knowledge is approximately congruent under the measurement of $dist(\cdot)$, which is the foundation of our motivation.
\end{itemize}

Based on the proposed knowledge refinement mapping $\varphi(\cdot)$, the new knowledge-similarity loss of the server partly depends on the refined-local knowledge among clients, which is described as follows:
\begin{equation}
	L_{sim}(p^S_{X^k},\varphi(z^k_{X^k})).
\end{equation}
As a consequence, the reformulated optimization problem with a new loss function during the global distillation process can be formulated as:
\begin{equation}
\label{server-distill}
\mathop {\arg \min }\limits_{{W^S}} {L^{'}_S}: = {L_{CE}}(p_{{X^k}}^S,{y^k}) + \beta  \cdot {L_{sim}}(p_{{X^k}}^S,\varphi (z_{{X^k}}^k)).
\end{equation}

Considering peak probability congruence and Shannon entropy congruence which are two disparate metrics to capture overall knowledge distribution, we design respective strategies for implementing knowledge refinement mapping. Specifically, kernel-based knowledge refinement (KKR) is tailored for refining the peak probability of normalized local knowledge to a customized hyper-parameter through performing a kernel-based transformation for every dimension of knowledge. Additionally, searching-based knowledge refinement (SKR) is proposed to achieve the Shannon entropy of refined-local knowledge in a given range by searching out a knowledge refinement mapping with the controlled value of output Shannon entropy. \textcolor{black}{Fig. \ref{plot} illustrates the local knowledge of two distributions extracted from samples in the TMD \cite{carpineti2018custom} dataset, where the red and blue fills respectively represent the distribution of (normalized) local knowledge from the 1st and 10th communication rounds. As displayed in Fig. \ref{plot}, the gap between the two knowledge distributions can be significantly reduced by KKR and SKR, indicating the effectiveness of distributed knowledge congruence strategy KKR and SKR in handling knowledge discrepancy.} The detailed process of our proposed strategies will be introduced in the following sections. 



\begin{figure*}[t]
	\centering
	\includegraphics[width=0.7 
	\textwidth]{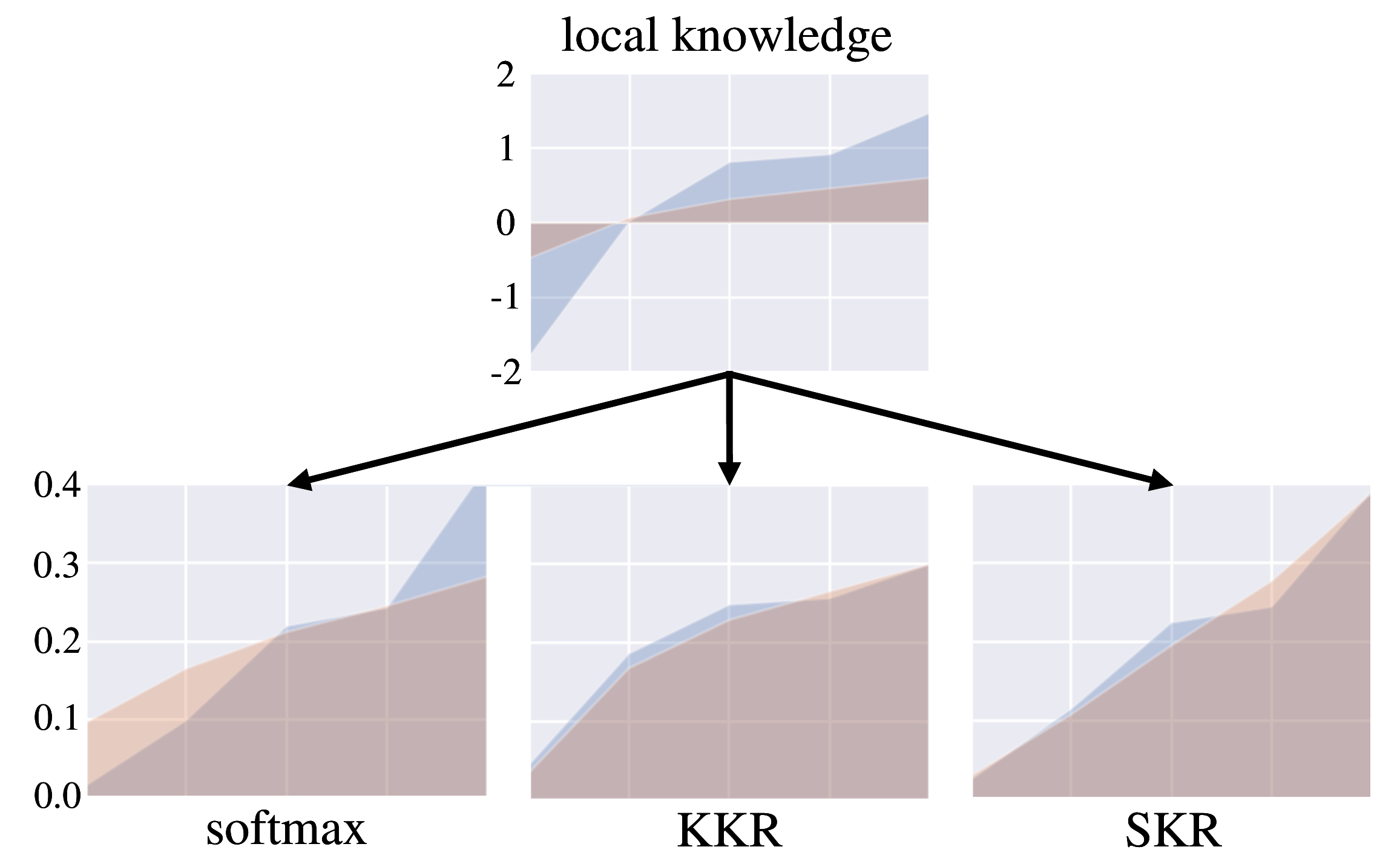}
	\caption{\textcolor{black}{Comparison of knowledge normalization with softmax, KKR and SKR over two distributions.}}
	\label{plot} 
\end{figure*}

\subsection{Kernel-based Knowledge Refinement}
\label{kn-cong}
This section proposes a kernel-based strategy to implement knowledge refinement. We adopt the maximum value in the normalized knowledge (called peak probability) to represent the distribution of the overall normalized local knowledge, \textcolor{black}{since it can reflect the model's confidence on a specific sample.}
The measurement function of knowledge distribution in KKR is defined as $dist_{KKR}(\cdot)=\max (\cdot)$, where $\max(\cdot)$ gets the maximum value of the input normalized knowledge.
To enable the peak probability congruence among clients, we require the refined peak probabilities of all clients to be a constant value $T$.

\textcolor{black}{To achieve this, we first define a non-direct-proportion and monotonically increasing kernel function $\sigma ( \cdot ), \sigma  \in \{ f|f(x) \ne k \cdot x\}  \cap \{ f|f({x_1}) - f({x_2}) \ge 0,\forall {x_1} \ge {x_2}\}$ to map each dimension of the softmax-normalized local knowledge.}
We expect that the refined-local knowledge jointly transformed from the parameterized multi-kernel functions can maintain the customized peak probability, and the parameter of kernel functions can be derived from the constraint that output peak probability equivalent to $T$.
Let $\varphi_{KKR} (z^k_{X^k})$ denote the refined-local knowledge of client $k$ derived by KKR strategy, and let ${\varphi _{KKR}}{(z_{{X^k}}^k)_i}$ denote the $i$-th dimension in $\varphi_{KKR} (z^k_{X^k})$. 
For each client $k$, $p_{X_k}^k=\tau(z^k_{X^k})$ represents its normalized knowledge, and $p^k_{X^k}:=(v_1^k,v_2^k,\cdots,v_C^k)$.
Each dimension $v_i^k$ is refined as follows: 
\begin{equation}
	{\varphi _{KKR}}{(z_{{X^k}}^k)_i} = \frac{{\sigma (\frac{v_i^k}{{t \cdot v_m^k}})}}{{\sum\limits_{j = 1}^C {\sigma (\frac{{{v_j^k}}}{{t \cdot v_m^k}})} }},
	\label{phi-init}
\end{equation}
where $m$ is the index of the empirically unique maximum value in $p^k_{X^k}$, i.e., $v_m^k=\max(p^k_{X^k})$.
Besides, $t$ represents the input scaling parameter of the kernel function $\sigma ( \cdot )$.
When $v_i^k=v_m^k$, $t$ should hold:
\begin{equation}
	\label{pk}
	\frac{{\sigma (\frac{1}{t})}}{{\sum\limits_{j = 1}^C {\sigma (\frac{{{v_j^k}}}{{t \cdot v_m^k}})} }} = T.
\end{equation}
Once $t$ is solved in Eq. (\ref{pk}), we can bring it into Eq. (\ref{phi-init}) and gain $\varphi_{KKR}(\cdot)$ as long as the properties mentioned in subsection \ref{constrains} are satisfied.
It is worth noting that there is no knowledge discrepancy among clients after refining, which means $\mid dist_{KKR}(\varphi(z_{X^k}^k))-dist_{KKR}(\varphi(z_{X^l}^l)) \mid=0$ for arbitrate clients $k$ and $l$ in this case.

To make Eq. (\ref{pk}) solvable, we further instantiate the kernel function as follows:
\begin{equation}
    \label{kernel-sample}
	\sigma(x)=kx+b, \forall k > 0,b > 0.
\end{equation}
Bringing Eq. (\ref{kernel-sample}) into Eq. (\ref{pk}), we have:
\begin{equation}
	\label{eeeee}
	\frac{{\frac{1}{t} + 1}}{{\sum\limits_{j = 1}^C {(\frac{{{v_j^k}}}{{t \cdot v_m^k}} + 1)} }} = T.
\end{equation}
Solving Eq. (\ref{eeeee}), $t$ is easily obtained as:
\begin{equation}
	t = \frac{{v_m^k - T}}{{v_m^k \cdot (CT - 1)}}.
\end{equation}
We bring $t$ into Eq. (\ref{phi-init}) to obtain the refined result of KKR strategy, which can be given by:
\begin{equation}
	{\varphi _{KKR}}{(z_{{X^k}}^k)_i} = \frac{{(CT - 1)\cdot {v_i^k} + v_m^k - T}}{C \cdot v_m^k-1}
	\label{KN-DKC-origin}.
\end{equation}

In appendix \ref{proof-theorms}, Theorem \ref{kn-less-zero} proves that the KKR strategy may project the local knowledge into a non-probability space, which indicates that one dimension in the refined-local knowledge $\varphi_{KKR} (z^k_{X^k})$ may be negative. 
Therefore, we further rectify the refined result Eq. (\ref{KN-DKC-origin}) as follows:
\begin{itemize}
    \item 
    When all dimensions in the refined-local knowledge is non-negative, i.e., $\{{\varphi _{KKR}}{(z_{{X^k}}^k)_j} \ge 0, \forall j \in \mathcal{C}\}$,  $\varphi _{KKR}(z_{{X^k}}^k)$ stays unchanged.
    \item
    When existing dimensions in $\varphi _{KKR}(z_{{X^k}}^k)$ are negative, we set the maximum dimension in $\varphi _{KKR}(z_{{X^k}}^k)$ as $T$, and let others satisfy the uniform distribution, setting as $\frac{1-T}{C-1}$.
\end{itemize}


After the above-mentioned rectification, we gain the final refined-local knowledge ${\varphi _{KKR}}(z^k_{X^k})$ via KKR strategy.
Theorem \ref{thm-1}, \ref{thm-3} and \ref{peak} prove that the KKR strategy satisfies three necessary properties mentioned in section \ref{constrains}, which is shown in appendix \ref{proof-theorms}.

\subsection{Searching-based Knowledge Refinement}
\label{searching-based}
This section proposes a searching-based strategy to implement knowledge refinement.
We adopt the Shannon entropy to indicate the distribution of normalized local knowledge, \textcolor{black}{since it integrally reflects the amount of information hidden in knowledge.}
The knowledge distribution measurement function in SKR is defined as $dist_{SKR}(\cdot) = H (\cdot)$, where $H(\cdot)$ is the Shannon entropy function.
In order to realize the Shannon entropy congruence among clients, we require that the difference between the  Shannon entropy of any refined-local knowledge and the target Shannon entropy $E$ is less than $\frac{\varepsilon }{2}$.

To this end, we define an auxiliary mapping $\psi(\theta;\cdot)$ with parameter $\theta$, to help search out an available refine mapping for SKR.
We expect that the refined-local knowledge transformed from the parameterized auxiliary mapping can satisfy the boundedness constraint of Shannon entropy differences, and the the parameter of the auxiliary mapping can be derived based on a root searching method with our given tolerance error.
Taking $z^k_{X^k}$ as input, we require $\psi(\theta;\cdot)$ to maintain numerical relationships in local knowledge unchanged, and its outputs are always in probability space, that is:
\begin{equation}
	\psi (\theta;z_{{X^k}}^k)_i \ge \psi (\theta;z_{{X^k}}^k)_j,\forall u_i^k \ge u_j^k,
\end{equation}
\begin{equation}
    \psi (\theta;z_{{X^k}}^k) \in \mathcal{P},\forall z_{{X^k}}^k,
\end{equation}
where $z^k_{X^k} := (u_1^k,u_2^k,......,u_C^k)$, and $\psi (\theta;z_{{X^k}}^k)_i$ is the $i$-th dimension in $\psi (\theta;z_{{X^k}}^k)$.
Our key idea is to search for an optimal parameter $\theta^*$ such that the difference between the refined knowledge's Shannon entropy and the target Shannon entropy $E$ is less than $\frac{\varepsilon }{2}$, which can be expressed as:
\begin{equation}
\label{search}
\begin{array}{l}
\theta^{\ast}: = \mathop {\arg \min }\limits_\theta  |H({\psi}(\theta ;z^k_{X^k})) - E|\\s.t. |H({\psi}(\theta ;z^k_{X^k})) - E| < \frac{\varepsilon }{2}.
\end{array}
\end{equation}
For client $k$, its $i$-th dimension in local knowledge $z_{X^k}^k$ is transformed via $\psi(\theta;\cdot)$, which can be given by
\begin{equation}
	\label{equ1}
	{\psi}(\theta ;z^k_{X^k})_i = \frac{{\exp (\frac{{u_i^k}}{\theta })}}{{\sum\limits_{j = 1}^C {\exp (\frac{{u_j^k}}{\theta })} }}.
\end{equation}
In this way, the searching problem of parameter $\theta$ can be converted into finding an approximate root of the following equation:
\begin{equation}
	\label{root-equ}
	{H({\psi}(\theta ;z^k_{X^k})) - E}=0,
\end{equation}
which takes $\frac{\varepsilon }{2}$ as the tolerable error.
In appendix \ref{proof-theorms}, Theorem \ref{root} prove that an approximate real root $\theta^*$ of Eq. (\ref{root-equ}) can be always figured out using the Bisection method \cite{corliss1977root}, which is also the optimal parameter that we expect to find. 
On this basis, let $\varphi_{SKR}(z^k_{X^k})$ denote the refined-local knowledge of client $k$ derived by SKR strategy, and it is defined as:
\begin{equation}
	\label{equ3}
	{\varphi _{SKR}}(z^k_{X^k}) = \psi ({\theta ^*};z^k_{X^k}).
\end{equation}
Moreover, Theorem \ref{thm-2}, \ref{shrelation} and \ref{dist-sh} prove that the SKR strategy satisfies three necessary properties mentioned in \ref{constrains}, which is shown in appendix \ref{proof-theorms}.


\begin{algorithm}[t]
    \setlength\extrarowheight{-0.4pt}
    \caption{\textbf{FedDKC}
	}
	\KwIn{
		$\{\mathcal{D}^1,\mathcal{D}^2,......,\mathcal{D}^K\}$, ${\{W^1,W^2,......,W^K\}}$
	}
	\KwOut{Trained ${\{W^1,W^2,......,W^K\}}$}
	\textbf{Initialization: }Initialize $z_{X^k}^S$ with zeros\\
	\Repeat{Reaches the number of maximum communication rounds}
	{
	    //Local Distillation Process\\
		\ForEach{$k \in \mathcal{K}$ in parallel}
		{
			\textbf{Step 1.1: }The client updates its weights based on global knowledge and local labels according to Eq. (\ref{client-distill})\\
			\textbf{Step 1.2: }The client extracts its features and local knowledge on $X^k$, that is ${H^k} = f(W_{e}^k;{X^k})$, ${z^k_{X^k}} = f({W_p^k};{H^k})$\\
	        \textbf{Step 1.3: }The client uploads $H^k$, $z^k_{X^k}$ and $y^k$ to the server\\
		}
		//Global Distillation Process\\
		\ForEach{$k \in \mathcal{K}$}
		{
				\textbf{Step 1.4: }The server computes the refined-local knowledge $c^k$ following Algorithm \ref{dkcm}.\\
				\textbf{Step 1.5: }The server updates its weights based on extracted features and local knowledge according to Eq. (\ref{server-distill})\\
				\textbf{Step 1.6: } The server generates global knowledge based on $H^k$, that is $z^S_{X^k} \leftarrow \tau(f(W^S;H^k))$\\
				\textbf{Step 1.7: }The server broadcasts $z^S_{X^k}$ to client $k$\\
		}
	}
	\textbf{Return: }Trained ${\{W^1,W^2,......,W^K\}}$\\
     \label{feddkc}
\end{algorithm}

\begin{algorithm}
    \caption{\textbf{Knowledge Refinement}
    \setlength\extrarowheight{-0.4pt}
	}
	\KwIn{$z^k_{X^k}$, $T$,  $E$}
	\KwOut{The refined-local knowledge of client $k$, denoted as $c^k$}
	\If{run \textbf{KKR}}
	{
		\textbf{Step 2.1: }Compute $c^k \leftarrow \varphi_{KKR} (z^k_{X^k})$ according to the rectified refined result of Eq. (\ref{KN-DKC-origin})
	}
	\ElseIf{run \textbf{SKR}}
	{
		\textbf{Step 2.2: }Based on Eq. (\ref{equ1}), search for an optimal $\theta^*$ by computing the approximate root of Eq. (\ref{root-equ})  using the Bisection algorithm in \cite{corliss1977root}\\
		\textbf{Step 2.3: }Compute $c^k \leftarrow \varphi_{SKR} (z^k_{X^k})$ with $\theta^*$ obtained, according to Eq. (\ref{equ1}) and Eq. (\ref{equ3})
	}
	\textbf{Return: }$c^k$
  \label{dkcm}
\end{algorithm}

\subsection{Formal Description of FedDKC}
We introduce our proposed proxy-data-free FD algorithm based on Distributed Knowledge Congruence (FedDKC) in Algorithm \ref{feddkc}, in which knowledge refinement strategy is adopted, as shown in Algorithm \ref{dkcm}.
In our algorithm, both the server and clients can perform knowledge distillation as well as knowledge generation. 
At the beginning of round $r$, each client parallelly performs local distillation jointly supervised by global knowledge and local labels (\textbf{Step 1.1}). 
It is followed by feature and knowledge extraction (\textbf{Step 1.2}). 
Then, each client uploads its extracted features, local knowledge, and corresponding labels to the server (\textbf{Step 1.3}). The server receives uploaded information from clients and refines the incongruent local knowledge (\textbf{Step 1.4}).
At this point, we can customize knowledge refinement strategies, KKR or SKR. 
The former is to be mapped according to the rectified refined result of Eq. (\ref{KN-DKC-origin}) (\textbf{Step 2.1}), and the latter needs to first search for a parameter according to Eq. (\ref{search}) (\textbf{Step 2.2}), and then refines local knowledge according to Eq. (\ref{equ1}) and Eq. (\ref{equ3}) (\textbf{Step 2.3}).
After that, feature-driven server-side distillation is conducted supervised by the refined-local knowledge along with local labels (\textbf{Step 1.5}).
After the server finishes distillation, the global knowledge is then generated based on the extracted features uploaded by clients (\textbf{Step 1.6}) and is transferred to corresponding clients (\textbf{Step 1.7}). At this point, the server and clients will start the next training round $r+1$ until model convergence.

\section{Experiments}
\label{experiments}
In this section, we provide experimental results to evaluate the performance of our proposed FedDKC algorithm, especially for verifying the accuracy improvements derived via knowledge refinement.
The detailed experiment settings are first described, and then simulation results are provided and analyzed.

\begin{figure}[t!]
	\centering
	\includegraphics[width=1 \textwidth]{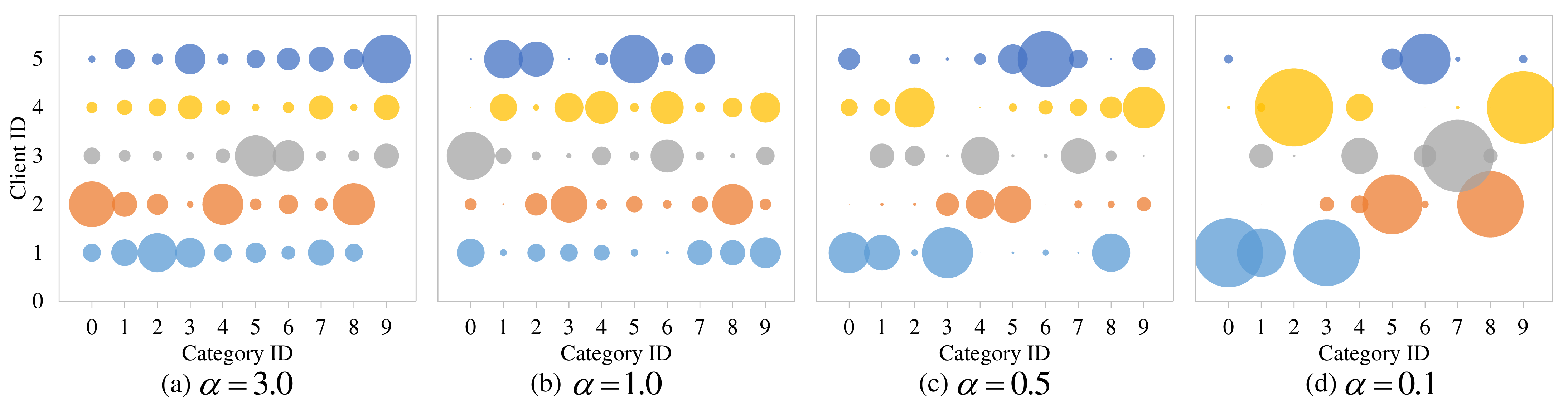}
	\caption{Visualization of data heterogeneity with different $\alpha$. Raw statistics are derived from CIFAR-10.}
	\label{data-distribution}
\end{figure}

\subsection{Experimental Setup}
\label{exp-setup}
\quad \textit{\textbf{1) Implementation and Datasets:}}
We conduct simulation experiments on a single physical server with multiple NVIDIA GeForce RTX 3090 GPU cards and enough memory space.
Our simulation code is implemented based on an open-source FL library \cite{he2020fedml}.
Besides, our training tasks are image classification on three datasets: MNIST \cite{lecun1998gradient}, CIFAR-10 \cite{cifar10} and CINIC-10 \cite{cinic10}.
We split the original dataset into five non-IID partitions and randomly distribute them to five clients. 
A hyper-parameter $\alpha$ is taken to control the degree of data heterogeneity among clients. 
Fig. \ref{data-distribution} visualizes the data distributions of clients with different $\alpha$ over CIFAR-10 dataset, in which the bubble radius indicates the samples number of a particular category in a clients' private data. 
As $\alpha$ decreases, the data distributions among clients exhibit a higher degree of heterogeneity.
In our experiments, we set $\alpha=\{0.1, 0.5, 1.0, 3.0\}$.
Before feeding data into models, we adopt commonly-used data preprocessing and augmentation strategies, including random cropping, random horizontal flipping, and normalization.

\textit{\textbf{2) Model Architecture:}}
In order to carry out model heterogeneity,
ResNet56 \cite{he2016deep} is adopted as the global model on the server; ResNet2, ResNet4, ResNet8, and ResNet10 are adopted as heterogeneous local models on five clients. 
For each local model, the feature extractor consists of the foremost Conv+Batch+ReLU+MaxPool layers, and the subsequent layers form the predictor. 
In particular, the server-side predictor is the whole global model. 
Different models exhibit a remarkable difference in terms of parameter size, memory consumption, and computation cost, as shown in Table \ref{model-hetero}.


\begin{table}[t]
\centering
\setlength\extrarowheight{0.0pt}
\caption{Configurations of models. (Taking $32\times32\times3$ as input)}
\begin{tabular}{ccccc}
\hline
\textbf{Device/Server} & \textbf{Model}& \textbf{Params (K)} & \textbf{Memory (MB)} & \textbf{Flops (M)} \\ \hline
Client 1              & ResNet2 & 0.63                & 0.31                 & 0.5                \\
Client 2             & ResNet4  & 5.18                & 1.28                 & 5.12               \\
Client 3           & ResNet8    & 10.34               & 6.93                 & 10.29              \\
Client 4/5         & ResNet10    & 9.74                & 2.17                 & 9.75               \\
Server             & ResNet56    & 577.01              & 33.79                & 87.28              \\ \hline
\end{tabular}
\label{model-hetero}
\end{table}


\color{black}
\textit{\textbf{3) Benchmarks and Criteria:}}
We compare our proposed FedDKC with state-of-art FD methods, FedGKT \cite{he2020group} and FCCL \cite{huang2022learn}. In addition, we measure the performance of the client-side models by the Top-1 and Top-5 accuracy achieved in 100 communication rounds.

\textit{\textbf{4) Hyperparameters:}} We adopt stochastic gradient descent optimizer with batch size 256, learning rate 0.03, and weight decay $5 \times {10^{ - 4}}$ for all methods. Specifically, we set the hyper-parameter for controlling the effect of knowledge similarity in loss function as $\beta=1.5$ in FedGKT and FedDKC. Besides, we leverage FashionMNIST \cite{xiao2017fashion} as the public dataset in FCCL, and follow other hyper-parameters settings in \cite{fccl-code}.
To ensure a high entropy of the refined-local knowledge in FedDKC, we set $T$ to the value that is slightly greater than $\frac{1}{C}$ and $E$ to the value that is slightly smaller than ${\log _2}C$. Precisely, we uniformly take $T$=0.11 and $E$=3.3, respectively.
\color{black}

\begin{table*}[t]
	\centering
 \setlength\extrarowheight{0.0pt}
 \setlength{\tabcolsep}{3.4pt}
	\caption{Top-1 and Top-5 accuracy on MNIST dataset. The \textbf{bold} numbers represents the best accuracy, and the {\ul underline} numbers are the second best accuracy. The same as below.
 }
\begin{tabular}{c|c|l|ccccc|c}
\hline
\textbf{\begin{tabular}[c]{@{}c@{}}Data\\ Hetero.\end{tabular}} & \textbf{Metric}                                                                & \multicolumn{1}{c|}{\textbf{Method}} & \textbf{Client 1} & \textbf{Client 2} & \textbf{Client 3} & \textbf{Client 4} & \textbf{Client 5} & \textbf{\begin{tabular}[c]{@{}c@{}}Clients\\ Avg.\end{tabular}} \\ \hline
\multirow{8}{*}{\textbf{$\alpha=3.0$}}                                 & \multirow{4}{*}{\textbf{\begin{tabular}[c]{@{}c@{}}Top-1\\ Acc.\end{tabular}}} & FedGKT                               & 30.79             & \textbf{84.88}    & \textbf{89.54}    & 92.58             & 83.66             & {\ul 76.29}                                                     \\
                                                                &                                                                                & FCCL                                 & 12.29             & 12.96             & 52.73             & 31.91             & 47.18             & 31.41                                                           \\
                                                                &                                                                                & KKR-FedDKC                           & \textbf{32.54}    & {\ul 82.28}       & 88.34             & {\ul 94.25}       & \textbf{86.13}    & \textbf{76.71}                                                  \\
                                                                &                                                                                & SKR-FedDKC                           & {\ul 32.29}       & 79.22             & {\ul 88.98}       & \textbf{94.50}    & {\ul 85.46}       & 76.09                                                           \\ \cline{2-9} 
                                                                & \multirow{4}{*}{\textbf{\begin{tabular}[c]{@{}c@{}}Top-5\\ Acc.\end{tabular}}} & FedGKT                               & 65.44             & \textbf{98.79}    & {\ul 98.74}       & 99.59             & 89.38             & 90.39                                                           \\
                                                                &                                                                                & FCCL                                 & 62.08             & 87.10             & 98.29             & 97.78             & 88.77             & 86.80                                                           \\
                                                                &                                                                                & KKR-FedDKC                           & \textbf{72.70}    & \textbf{98.79}    & \textbf{99.28}    & \textbf{99.64}    & \textbf{89.69}    & \textbf{92.02}                                                  \\
                                                                &                                                                                & SKR-FedDKC                           & {\ul 71.48}       & 97.03             & 98.66             & {\ul 99.60}       & {\ul 89.55}       & {\ul 91.26}                                                     \\ \hline
\multirow{8}{*}{\textbf{$\alpha=1.0$}}                                 & \multirow{4}{*}{\textbf{\begin{tabular}[c]{@{}c@{}}Top-1\\ Acc.\end{tabular}}} & FedGKT                               & 29.94             & {\ul 66.62}       & 73.11             & {\ul 86.78}       & 82.07             & 67.70                                                           \\
                                                                &                                                                                & FCCL                                 & 13.39             & 20.62             & 43.10             & 25.74             & 44.10             & 29.39                                                           \\
                                                                &                                                                                & KKR-FedDKC                           & \textbf{35.45}    & 62.84             & \textbf{77.84}    & \textbf{87.05}    & {\ul 87.55}       & {\ul 70.15}                                                     \\
                                                                &                                                                                & SKR-FedDKC                           & {\ul 33.58}       & \textbf{70.09}    & {\ul 75.71}       & 86.52             & \textbf{88.97}    & \textbf{70.97}                                                  \\ \cline{2-9} 
                                                                & \multirow{4}{*}{\textbf{\begin{tabular}[c]{@{}c@{}}Top-5\\ Acc.\end{tabular}}} & FedGKT                               & 70.12             & 78.76             & 87.24             & {\ul 90.27}       & 97.92             & 84.86                                                           \\
                                                                &                                                                                & FCCL                                 & 69.90             & 76.56             & 87.27             & 87.88             & 96.61             & 83.64                                                           \\
                                                                &                                                                                & KKR-FedDKC                           & {\ul 72.56}       & {\ul 78.83}       & \textbf{88.00}    & 90.26             & {\ul 99.20}       & \textbf{85.77}                                                  \\
                                                                &                                                                                & SKR-FedDKC                           & \textbf{72.60}    & \textbf{79.12}    & {\ul 87.34}       & \textbf{90.28}    & \textbf{99.39}    & {\ul 85.75}                                                     \\ \hline
\multirow{8}{*}{\textbf{$\alpha=0.5$}}                                 & \multirow{4}{*}{\textbf{\begin{tabular}[c]{@{}c@{}}Top-1\\ Acc.\end{tabular}}} & FedGKT                               & {\ul 29.95}       & {\ul 55.39}       & {\ul 58.25}       & 60.62             & 71.54             & 55.15                                                           \\
                                                                &                                                                                & FCCL                                 & 16.42             & 16.89             & 41.48             & 25.92             & 34.25             & 26.99                                                           \\
                                                                &                                                                                & KKR-FedDKC                           & \textbf{30.08}    & \textbf{55.92}    & \textbf{68.58}    & {\ul 69.10}       & \textbf{79.22}    & \textbf{60.58}                                                  \\
                                                                &                                                                                & SKR-FedDKC                           & 29.82             & 53.69             & 67.41             & \textbf{69.46}    & {\ul 78.80}       & {\ul 59.84}                                                     \\ \cline{2-9} 
                                                                & \multirow{4}{*}{\textbf{\begin{tabular}[c]{@{}c@{}}Top-5\\ Acc.\end{tabular}}} & FedGKT                               & 60.28             & 73.22             & 89.09             & 79.96             & \textbf{89.36}    & 78.38                                                           \\
                                                                &                                                                                & FCCL                                 & \textbf{70.89}    & \textbf{83.17}    & \textbf{91.60}    & \textbf{84.52}    & 86.93             & \textbf{83.42}                                                  \\
                                                                &                                                                                & KKR-FedDKC                           & {\ul 62.82}       & {\ul 76.00}       & {\ul 89.82}       & {\ul 81.84}       & 89.24             & {\ul 79.94}                                                     \\
                                                                &                                                                                & SKR-FedDKC                           & 62.52             & 72.90             & 89.62             & 79.59             & {\ul 89.28}       & 78.78                                                           \\ \hline
\multirow{8}{*}{\textbf{$\alpha=0.1$}}                                 & \multirow{4}{*}{\textbf{\begin{tabular}[c]{@{}c@{}}Top-1\\ Acc.\end{tabular}}} & FedGKT                               & 20.72             & 28.77             & 22.36             & 18.91             & 21.02             & 22.36                                                           \\
                                                                &                                                                                & FCCL                                 & 17.92             & 14.62             & \textbf{26.25}    & \textbf{19.93}    & 22.44             & 20.23                                                           \\
                                                                &                                                                                & KKR-FedDKC                           & {\ul 21.34}       & {\ul 28.86}       & {\ul 25.44}       & 18.92             & \textbf{28.13}    & \textbf{24.54}                                                  \\
                                                                &                                                                                & SKR-FedDKC                           & \textbf{21.57}    & \textbf{29.13}    & 22.94             & {\ul 18.93}       & {\ul 24.94}       & {\ul 23.50}                                                     \\ \cline{2-9} 
                                                                & \multirow{4}{*}{\textbf{\begin{tabular}[c]{@{}c@{}}Top-5\\ Acc.\end{tabular}}} & FedGKT                               & \textbf{54.79}    & 49.74             & \textbf{79.43}    & 49.20             & 50.86             & \textbf{56.80}                                                  \\
                                                                &                                                                                & FCCL                                 & {\ul 52.71}       & \textbf{51.03}    & {\ul 54.49}       & \textbf{51.50}    & \textbf{52.70}    & {\ul 52.49}                                                     \\
                                                                &                                                                                & KKR-FedDKC                           & 51.27             & {\ul 49.91}       & 49.56             & 47.02             & {\ul 51.93}       & 49.94                                                           \\
                                                                &                                                                                & SKR-FedDKC                           & 52.36             & 48.11             & {\ul 51.35}       & 49.91             & 47.07             & 49.76                                                           \\ \hline
\end{tabular}
\label{exp-mnist}
\end{table*}

\begin{table*}[t]
	\centering
 \setlength\extrarowheight{0.0pt}
	\caption{Top-1 and Top-5 accuracy on CIFAR-10 dataset.}
        \setlength{\tabcolsep}{3.4pt}	
\begin{tabular}{c|c|l|ccccc|c}
\hline
\textbf{\begin{tabular}[c]{@{}c@{}}Data\\ Hetero.\end{tabular}} & \textbf{Metric}                      & \multicolumn{1}{c|}{\textbf{Method}} & \textbf{Client 1} & \textbf{Client 2} & \textbf{Client 3} & \textbf{Client 4} & \textbf{Client 5} & \textbf{\begin{tabular}[c]{@{}c@{}}Clients\\ Avg.\end{tabular}} \\ \hline
\multirow{8}{*}{\textbf{$\alpha=3.0$}}                                 & \multirow{4}{*}{\textbf{\begin{tabular}[c]{@{}c@{}}Top-1\\ Acc.\end{tabular}}} & FedGKT                               & 27.43             & \textbf{42.69}    & 48.11             & {\ul 47.42}       & \textbf{51.98}    & 43.53                                                           \\
                                                                &                                      & FCCL                                 & 19.38             & 20.87             & 31.35             & 29.93             & 32.57             & 26.82                                                           \\
                                                                &                                      & KKR-FedDKC                           & {\ul 30.29}       & {\ul 42.64}       & {\ul 51.04}       & 45.10             & 49.26             & {\ul 43.67}                                                     \\
                                                                &                                      & SKR-FedDKC                           & \textbf{30.73}    & 44.25             & \textbf{51.98}    & \textbf{51.43}    & {\ul 50.86}       & \textbf{45.85}                                                  \\ \cline{2-9} 
                                                                & \multirow{4}{*}{\textbf{\begin{tabular}[c]{@{}c@{}}Top-5\\ Acc.\end{tabular}}} & FedGKT                               & 77.86             & 76.26             & 82.21             & 89.53             & 85.56             & 82.28                                                           \\
                                                                &                                      & FCCL                                 & 70.15             & 68.93             & 83.90             & 82.30             & 86.09             & 78.27                                                           \\
                                                                &                                      & KKR-FedDKC                           & \textbf{79.35}    & \textbf{79.43}    & {\ul 89.14}       & {\ul 90.99}       & \textbf{90.70}    & {\ul 85.92}                                                     \\
                                                                &                                      & SKR-FedDKC                           & {\ul 79.05}       & {\ul 79.39}       & \textbf{92.12}    & \textbf{93.25}    & {\ul 90.64}       & \textbf{86.89}                                                  \\ \hline
\multirow{8}{*}{\textbf{$\alpha=1.0$}}                                 & \multirow{4}{*}{\textbf{\begin{tabular}[c]{@{}c@{}}Top-1\\ Acc.\end{tabular}}} & FedGKT                               & 21.40             & 36.53             & 37.53             & \textbf{39.87}    & 35.90             & 34.25                                                           \\
                                                                &                                      & FCCL                                 & 20.20             & 22.74             & 27.67             & 28.04             & 22.07             & 24.14                                                           \\
                                                                &                                      & KKR-FedDKC                           & {\ul 26.79}       & {\ul 37.27}       & {\ul 40.85}       & 38.58             & {\ul 36.70}       & {\ul 36.04}                                                     \\
                                                                &                                      & SKR-FedDKC                           & \textbf{27.27}    & \textbf{39.53}    & \textbf{48.07}    & {\ul 38.77}       & \textbf{37.08}    & \textbf{38.14}                                                  \\ \cline{2-9} 
                                                                & \multirow{4}{*}{\textbf{\begin{tabular}[c]{@{}c@{}}Top-5\\ Acc.\end{tabular}}} & FedGKT                               & 64.97             & 78.69             & 77.403            & 71.18             & 59.54             & 70.36                                                           \\
                                                                &                                      & FCCL                                 & 67.54             & 77.89             & 80.33             & 73.65             & {\ul 63.95}       & 72.67                                                           \\
                                                                &                                      & KKR-FedDKC                           & \textbf{75.09}    & {\ul 83.18}       & \textbf{88.52}    & \textbf{79.16}    & \textbf{66.74}    & \textbf{78.54}                                                  \\
                                                                &                                      & SKR-FedDKC                           & {\ul 68.57}       & \textbf{83.33}    & {\ul 88.51}       & {\ul 77.88}       & 63.16             & {\ul 76.29}                                                     \\ \hline
\multirow{8}{*}{\textbf{$\alpha=0.5$}}                                 & \multirow{4}{*}{\textbf{\begin{tabular}[c]{@{}c@{}}Top-1\\ Acc.\end{tabular}}} & FedGKT                               & \textbf{24.23}    & 28.67             & {\ul 37.33}       & 46.06             & 35.16             & 34.29                                                           \\
                                                                &                                      & FCCL                                 & 16.68             & 24.13             & 23.82             & 29.04             & 28.24             & 24.38                                                           \\
                                                                &                                      & KKR-FedDKC                           & {\ul 24.12}       & \textbf{30.79}    & \textbf{37.97}    & {\ul 46.84}       & {\ul 37.31}       & \textbf{35.41}                                                  \\
                                                                &                                      & SKR-FedDKC                           & 24.09             & {\ul 29.10}       & 36.46             & \textbf{47.97}    & \textbf{38.50}    & {\ul 35.22}                                                     \\ \cline{2-9} 
                                                                & \multirow{4}{*}{\textbf{\begin{tabular}[c]{@{}c@{}}Top-5\\ Acc.\end{tabular}}} & FedGKT                               & 55.60             & 63.42             & 59.82             & 75.81             & 65.71             & 64.07                                                           \\
                                                                &                                      & FCCL                                 & 55.30             & 67.29    & 67.89             & {\ul 76.69}       & {\ul 71.96}       & {\ul 67.83}                                                     \\
                                                                &                                      & KKR-FedDKC                           & \textbf{56.83}    & \textbf{69.34}    & {\ul 65.11}       & 76.10             & \textbf{72.52}    & \textbf{67.98}                                                  \\
                                                                &                                      & SKR-FedDKC                           & {\ul 56.63}       & {\ul 67.82}       & 62.70             & \textbf{77.23}    & 71.88             & 67.25                                                           \\ \hline
\multirow{8}{*}{\textbf{$\alpha=0.1$}}                                 & \multirow{4}{*}{\textbf{\begin{tabular}[c]{@{}c@{}}Top-1\\ Acc.\end{tabular}}} & FedGKT                               & 20.85             & 25.38             & 34.45             & \textbf{25.11}    & 30.94             & 27.35                                                           \\
                                                                &                                      & FCCL                                 & 17.13             & 20.28             & 31.69             & 17.70             & 20.69             & 21.50                                                           \\
                                                                &                                      & KKR-FedDKC                           & {\ul 21.24}       & \textbf{27.43}    & {\ul 35.40}       & 22.68             & {\ul 31.10}       & {\ul 27.57}                                                     \\
                                                                &                                      & SKR-FedDKC                           & \textbf{21.37}    & {\ul 26.80}       & \textbf{36.37}    & {\ul 23.26}       & \textbf{35.51}    & \textbf{28.66}                                                  \\ \cline{2-9} 
                                                                & \multirow{4}{*}{\textbf{\begin{tabular}[c]{@{}c@{}}Top-5\\ Acc.\end{tabular}}} & FedGKT                               & \textbf{50.67}    & 50.00             & 50.07             & \textbf{65.49}    & 50.22             & 53.29                                                           \\
                                                                &                                      & FCCL                                 & 49.88             & \textbf{63.60}    & \textbf{63.07}    & {\ul 60.99}       & {\ul 54.13}       & \textbf{58.33}                                                  \\
                                                                &                                      & KKR-FedDKC                           & 49.05             & 50.01             & 52.19             & 58.08             & 52.05             & 52.28                                                           \\
                                                                &                                      & SKR-FedDKC                           & {\ul 49.99}       & {\ul 51.40}       & {\ul 61.48}       & 60.43             & \textbf{58.92}    & {\ul 56.44}                                                     \\ \hline
\end{tabular}
\label{exp-cifar}
\end{table*}

\begin{table*}
	\centering
	\setlength\extrarowheight{-0.5pt}
 \setlength\extrarowheight{0.0pt}
	\caption{Top-1 and Top-5 accuracy on CINIC-10 dataset.}
 \setlength{\tabcolsep}{3.4pt}
\begin{tabular}{c|c|l|ccccc|c}
\hline
\textbf{\begin{tabular}[c]{@{}c@{}}Data\\ Hetero.\end{tabular}} & \textbf{Metric}                      & \multicolumn{1}{c|}{\textbf{Method}} & \textbf{Client 1} & \textbf{Client 2} & \textbf{Client 3} & \textbf{Client 4} & \textbf{Client 5} & \textbf{\begin{tabular}[c]{@{}c@{}}Clients\\ Avg.\end{tabular}} \\ \hline
\multirow{8}{*}{\textbf{$\alpha=3.0$}}                                 & \multirow{4}{*}{\textbf{\begin{tabular}[c]{@{}c@{}}Top-1\\ Acc.\end{tabular}}} & FedGKT                               & 22.84             & 33.37             & 34.83             & 32.23             & 35.55             & 31.76                                                           \\
                                                                &                                      & FCCL                                 & 21.14             & 24.79             & 31.50             & 20.43             & 32.56             & 26.08                                                           \\
                                                                &                                      & KKR-FedDKC                           & {\ul 25.79}       & {\ul 37.86}       & {\ul 34.93}       & \textbf{39.84}    & {\ul 37.73}       & \textbf{35.23}                                                  \\
                                                                &                                      & SKR-FedDKC                           & \textbf{25.87}    & \textbf{38.28}    & \textbf{35.51}    & {\ul 37.85}       & \textbf{38.28}    & {\ul 35.16}                                                     \\ \cline{2-9} 
                                                                & \multirow{4}{*}{\textbf{\begin{tabular}[c]{@{}c@{}}Top-5\\ Acc.\end{tabular}}} & FedGKT                               & 68.49             & 73.32             & 62.92             & 80.58             & 77.94             & 72.65                                                           \\
                                                                &                                      & FCCL                                 & 74.44             & 76.56             & \textbf{69.94}    & 75.94             & 81.74             & 75.72                                                           \\
                                                                &                                      & KKR-FedDKC                           & \textbf{78.47}    & \textbf{82.17}    & 64.72             & \textbf{82.80}    & \textbf{86.22}    & \textbf{78.88}                                                  \\
                                                                &                                      & SKR-FedDKC                           & {\ul 77.68}       & {\ul 80.76}       & {\ul 65.07}       & {\ul 81.47}       & {\ul 86.02}       & {\ul 78.20}                                                     \\ \hline
\multirow{8}{*}{\textbf{$\alpha=1.0$}}                                 & \multirow{4}{*}{\textbf{\begin{tabular}[c]{@{}c@{}}Top-1\\ Acc.\end{tabular}}} & FedGKT                               & 21.08             & 27.59             & 33.50             & 22.40             & 31.97             & 27.31                                                           \\
                                                                &                                      & FCCL                                 & 19.31             & 22.44             & 30.84             & 20.40             & 24.39             & 23.48                                                           \\
                                                                &                                      & KKR-FedDKC                           & \textbf{23.72}    & \textbf{31.66}    & \textbf{34.98}    & \textbf{28.56}    & \textbf{37.62}    & \textbf{31.31}                                                  \\
                                                                &                                      & SKR-FedDKC                           & {\ul 22.73}       & {\ul 29.42}       & {\ul 34.31}       & {\ul 27.76}       & {\ul 36.05}       & {\ul 30.05}                                                     \\ \cline{2-9} 
                                                                & \multirow{4}{*}{\textbf{\begin{tabular}[c]{@{}c@{}}Top-5\\ Acc.\end{tabular}}} & FedGKT                               & 60.36             & 65.47             & 61.13             & 71.66             & 71.04             & 65.93                                                           \\
                                                                &                                      & FCCL                                 & {\ul 66.32}       & 67.42             & \textbf{68.75}    & 70.07             & 69.80             & 68.47                                                           \\
                                                                &                                      & KKR-FedDKC                           & \textbf{67.18}    & \textbf{69.11}    & {\ul 64.96}       & \textbf{76.54}    & \textbf{74.45}    & \textbf{70.45}                                                  \\
                                                                &                                      & SKR-FedDKC                           & 64.81             & {\ul 68.08}       & 63.79             & {\ul 75.10}       & {\ul 72.93}       & {\ul 68.94}                                                     \\ \hline
\multirow{8}{*}{\textbf{$\alpha=0.5$}}                                 & \multirow{4}{*}{\textbf{\begin{tabular}[c]{@{}c@{}}Top-1\\ Acc.\end{tabular}}} & FedGKT                               & 14.95             & 29.70             & 24.95             & 28.86             & 32.91             & 26.27                                                           \\
                                                                &                                      & FCCL                                 & \textbf{17.68}    & 23.85             & 27.56             & 25.05             & 25.81             & 23.99                                                           \\
                                                                &                                      & KKR-FedDKC                           & {\ul 16.24}       & \textbf{32.05}    & \textbf{26.88}    & {\ul 30.05}       & \textbf{37.77}    & \textbf{28.60}                                                  \\
                                                                &                                      & SKR-FedDKC                           & 16.02             & {\ul 31.18}       & {\ul 26.50}       & \textbf{30.63}    & {\ul 36.14}       & {\ul 28.09}                                                     \\ \cline{2-9} 
                                                                & \multirow{4}{*}{\textbf{\begin{tabular}[c]{@{}c@{}}Top-5\\ Acc.\end{tabular}}} & FedGKT                               & 58.70             & 64.42             & 64.94             & 54.39             & 71.95             & 62.88                                                           \\
                                                                &                                      & FCCL                                 & \textbf{61.45}    & 66.63             & \textbf{74.09}    & \textbf{62.20}    & 71.72             & \textbf{67.22}                                                  \\
                                                                &                                      & KKR-FedDKC                           & {\ul 60.44}       & \textbf{69.08}    & {\ul 70.04}       & 55.86             & \textbf{74.55}    & {\ul 65.99}                                                     \\
                                                                &                                      & SKR-FedDKC                           & 59.97             & {\ul 67.22}       & 69.38             & {\ul 55.98}       & {\ul 72.21}       & 64.95                                                           \\ \hline
\multirow{8}{*}{\textbf{$\alpha=0.1$}}                                 & \multirow{4}{*}{\textbf{\begin{tabular}[c]{@{}c@{}}Top-1\\ Acc.\end{tabular}}} & FedGKT                               & 21.82             & 16.79             & 21.19             & 20.83             & 19.74             & 20.07                                                           \\
                                                                &                                      & FCCL                                 & 21.12             & 17.32             & 20.69             & 19.34             & {\ul 20.66}       & 19.83                                                           \\
                                                                &                                      & KKR-FedDKC                           & \textbf{23.26}    & \textbf{23.73}    & \textbf{22.83}    & \textbf{21.57}    & \textbf{21.42}    & \textbf{22.56}                                                  \\
                                                                &                                      & SKR-FedDKC                           & {\ul 23.08}       & {\ul 23.47}       & {\ul 22.99}       & {\ul 20.96}       & 20.56             & {\ul 22.21}                                                     \\ \cline{2-9} 
                                                                & \multirow{4}{*}{\textbf{\begin{tabular}[c]{@{}c@{}}Top-5\\ Acc.\end{tabular}}} & FedGKT                               & 50.26             & 49.24             & 53.36             & 49.88             & 50.08             & 50.56                                                           \\
                                                                &                                      & FCCL                                 & 51.99             & 50.48             & \textbf{64.92}    & \textbf{53.74}    & \textbf{50.44}    & {\ul 54.31}                                                     \\
                                                                &                                      & KKR-FedDKC                           & \textbf{55.38}    & \textbf{57.15}    & {\ul 63.91}       & 50.13             & 49.97             & \textbf{55.31}                                                  \\
                                                                &                                      & SKR-FedDKC                           & {\ul 53.85}       & {\ul 56.63}       & 53.66             & {\ul 50.55}       & {\ul 50.09}       & 52.96                                                           \\ \hline
\end{tabular}
\label{exp-cinic}
\end{table*}

\color{black}
\subsection{Results}
\label{exp-results}
\quad\textit{\textbf{1) Performance Overview:}}
Table \ref{exp-mnist}, \ref{exp-cifar} and \ref{exp-cinic} display the experimental results on MNIST, CIFAR-10 and CINIC-10 datasets, respectively. Overall, our proposed FedDKC achieves superior performance than benchmark algorithms in terms of Top-1 and Top-5 accuracy on average over all datasets.
For KKR-FedDKC, the average Top-1 accuracy is improved by 1.31\% and 16.18\% compared to FedGKT and FCCL respectively, and the average Top-5 accuracy is improved by 2.55\% and 1.16\% respectively. 
For SKR-FedDKC, the average Top-1 accuracy improvements over FedGKT and FCCL are 2.28\% and 16.29\%; and the average Top-5 accuracy improvements are 2.09\% and 0.70\%, respectively.

Furthermore, we conduct comparisons on three datasets with four degrees of data heterogeneity, including a total of 120 groups of comparisons with two metrics. 
Compared with the best performance amonng FedGKT and FCCL, KKR-FedDKC and SKR-FedDKC achieve accuracy improvements in 73 and 68 groups, respectively.
Overall, our proposed FedDKC outperforms all considered benchmarks in most of the comparisons. 
Hence, we can conclude that our methods are generally applicable to improve the performance of individual clients.

\begin{figure*}[t]
	\centering
	\includegraphics[width=1 \textwidth]{
		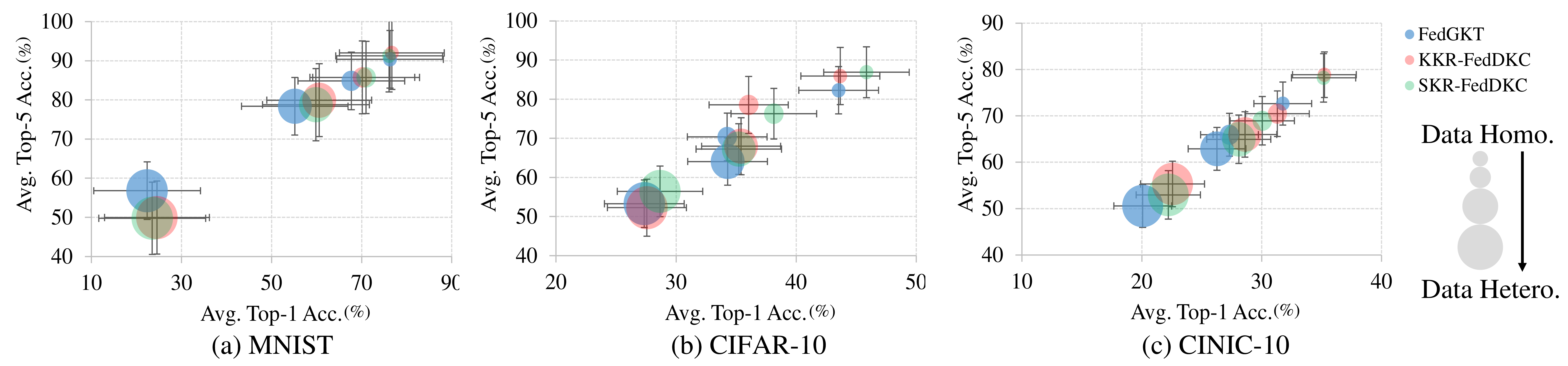}
	\caption{Average Top-1 accuracy on three datasets with various degrees of data heterogeneity.}
	\label{bubble-overview}
	\centering
	\includegraphics[width=1 \textwidth]{
		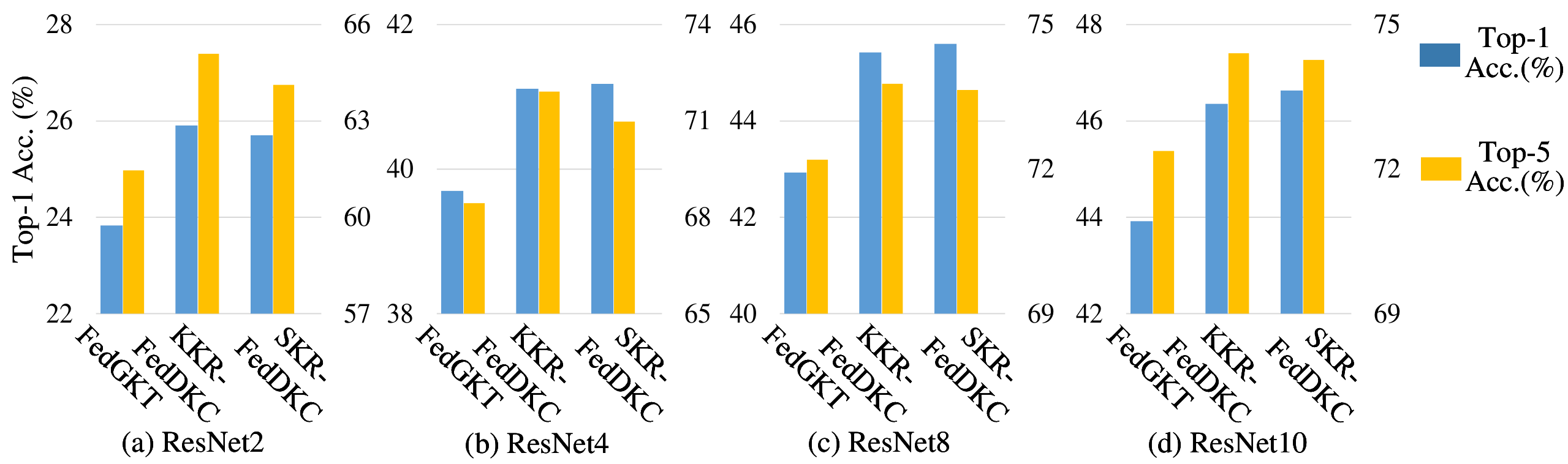}
	\caption{The average Top-1 accuracy of local models with different architectures evaluated on three datasets.}
	\label{model-diff}

 	\centering
	\includegraphics[width=1 \textwidth]{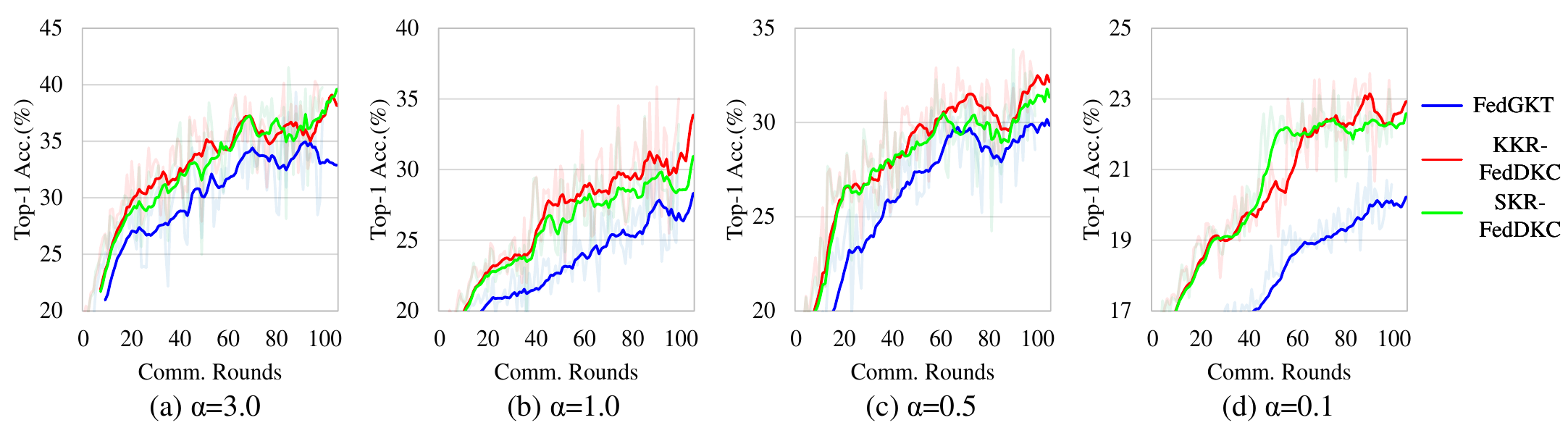}
	\caption{
		Learning curves of ResNet4 on different degrees of data heterogeneity over CINIC-10 dataset.
	}
	\label{conv-data}

 \centering
	\includegraphics[width=1 \textwidth]{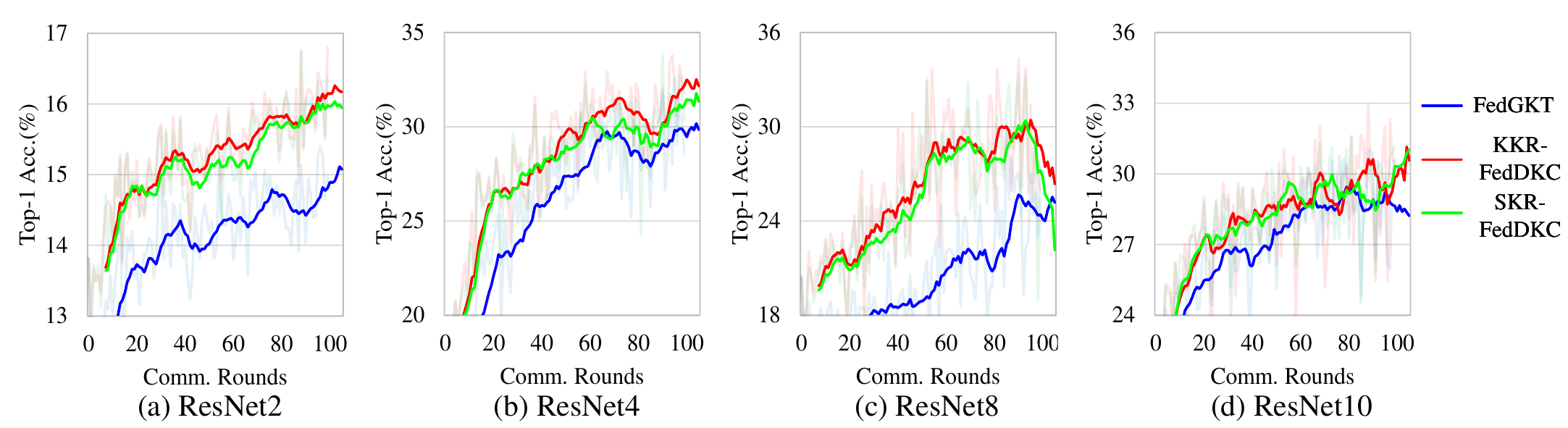}
	\caption{
		Learning curves on local models with different architectures over CINIC-10 dataset, taking $\alpha$=0.5. Results of ResNet10 are obtained from Client 4.
	}
	\label{conv-model}
 \end{figure*}

\textit{\textbf{2) Performance on Heterogeneous Data:}}
Fig. \ref{bubble-overview} compares the average accuracies of FedGKT, KKR-FedDKC, and SKR-FedDKC on different datasets under diverse degrees of data heterogeneity. As displayed, the red and green bubbles are always on the upper right of the blue bubbles for the same radius of bubbles. Hence, we can draw that FedDKC can effectively improve the general performance of clients compared with FedGKT, regardless of data heterogeneity.

\textit{\textbf{3) Performance on Heterogeneous Models:}}
Fig. \ref{model-diff} shows the comparison of the average Top-1 accuracy of local models trained with FedDKC and FedGKT on three datasets, categorized by model architectures.
We can determine that FedDKC is generally effective for local models with all kinds of architectures. 
The reason is that FedDKC can mitigate the local knowledge discrepancy during server-side distillation via KKR or SKR strategy, and thus can capture more globally-generalized representations, which will benefit client-side local distillation in turn.


\begin{figure*}[t]

\end{figure*}

\textit{\textbf{4) Communication Robustness:}}
Fig. \ref{conv-data} and Fig. \ref{conv-model} show the learning curves on different degrees of data heterogeneity and different local models, respectively. From Fig. \ref{conv-data}, we observe that FedDKC can consistently exhibit better performance than FedGKT under various data heterogeneity settings with the same number of communication rounds. 
Fig. \ref{conv-model} further confirms that FedDKC can achieve faster convergence for all heterogeneous models on clients, regardless of the knowledge refinement strategy. 
In general, compared with FedGKT, FedDKC does not increase any additional communication overhead in a single round, and can achieve faster convergence under various degrees of data heterogeneity and model architectures.
\color{black}

\begin{table}[!t]
\caption{Top-1 and Top-5 accuracy with different number of clients. Results are based on CIFAR-10 dataset, taking $\alpha=1.0$.}
\setlength{\tabcolsep}{8pt}
\setlength\extrarowheight{0.0pt}
\label{ablation}
\centering
\begin{tabular}{c|ccccc}
\hline
\multirow{2}{*}{\textbf{Method}} & \multicolumn{5}{c}{\textbf{Avg. Top-1 Acc.(\%)}}                                                                                                                                          \\ \cline{2-6} 
                                 & \multicolumn{1}{c|}{\textbf{5 Clients}} & \multicolumn{1}{c|}{\textbf{10 Clients}} & \multicolumn{1}{c|}{\textbf{20 Clients}} & \multicolumn{1}{c|}{\textbf{50 Clients}} & \textbf{Avg.}  \\ \hline
FedGKT                  & \multicolumn{1}{c|}{34.25}              & \multicolumn{1}{c|}{27.29}               & \multicolumn{1}{c|}{{\ul 26.07}}         & \multicolumn{1}{c|}{21.60}               & 29.20          \\
KKR-FedDKC              & \multicolumn{1}{c|}{{\ul 36.04}}        & \multicolumn{1}{c|}{\textbf{29.13}}      & \multicolumn{1}{c|}{\textbf{26.47}}      & \multicolumn{1}{c|}{{\ul 22.83}}         & {\ul 30.55}    \\
SKR-FedDKC              & \multicolumn{1}{c|}{\textbf{38.14}}     & \multicolumn{1}{c|}{{\ul 28.99}}         & \multicolumn{1}{c|}{25.83}               & \multicolumn{1}{c|}{\textbf{22.94}}      & \textbf{30.99} \\ \hline
\multirow{2}{*}{\textbf{Method}} & \multicolumn{5}{c}{\textbf{Avg. Top-5 Acc.(\%)}}                                                                                                                                          \\ \cline{2-6} 
                                 & \multicolumn{1}{c|}{\textbf{5 Clients}} & \multicolumn{1}{c|}{\textbf{10 Clients}} & \multicolumn{1}{c|}{\textbf{20 Clients}} & \multicolumn{1}{c|}{\textbf{50 Clients}} & \textbf{Avg.}  \\ \hline
FedGKT                  & \multicolumn{1}{c|}{70.36}              & \multicolumn{1}{c|}{63.27}               & \multicolumn{1}{c|}{\textbf{65.69}}      & \multicolumn{1}{c|}{61.31}               & 63.42          \\
KKR-FedDKC              & \multicolumn{1}{c|}{\textbf{78.54}}     & \multicolumn{1}{c|}{\textbf{68.04}}      & \multicolumn{1}{c|}{{\ul 65.04}}         & \multicolumn{1}{c|}{\textbf{63.74}}      & \textbf{65.66} \\
SKR-FedDKC              & \multicolumn{1}{c|}{{\ul 76.29}}        & \multicolumn{1}{c|}{{\ul 66.79}}         & \multicolumn{1}{c|}{64.16}               & \multicolumn{1}{c|}{{\ul 63.38}}         & {\ul 64.78}    \\ \hline
\end{tabular}
\end{table}

\color{black}
\textit{\textbf{5) Performance on Larger Number of Clients:}}
We further conduct experiments on more clients to evaluate the effectiveness of FedDKC in scenarios with larger number of clients.
Specifically, we fix the hyper-parameter $\alpha=1.0$ on the CIFAR-10 dataset, and vary the number of clients $K \in \{5,10,20,50\}$. 
Clients whose number mod 5 has a remainder of 0$\sim$4 adopt the model architectures of Client 1$\sim$5 described in Table \ref{model-hetero},
and keep other settings as described in section \ref{exp-setup}.
Thereout, we obtain the performance of FedDKC-KKR, FedDKC-SKR, and FedGKT with different numbers of clients in Table \ref{ablation}.
As displayed, although all methods achieve superior performance as the number of clients increases, FedDKC consistently outperforms FedGKT, indicating that our proposed methods can be adapted to the larger-scale FL scenarios.
\color{black}

\section{Discussions}
\subsection{Customizing Kernel Functions for KKR}
This section provides further guidance for customizing kernel functions in KKR, which can support more subtly and controllable knowledge refinement.
We give out the relaxation conditions for available kernel functions $\sigma(\cdot)$ in KKR, which are as follows:
\begin{itemize}
	\label{pro-advance}
	\item
	Non-direct-proportion
	\item
	Continuous and monotonically increasing
	\item
	Function value is consistently positive
	\item
	Parameter $t$ is solvable in Eq. (\ref{pk})
\end{itemize}
In appendix \ref{proof-theorms}, Theorem \ref{general-kn} proves that the necessary properties of refinement mapping $\varphi(\cdot)$ mentioned in section \ref{constrains} can be satisfied as long as the above relaxation conditions are met.
Up to this point, the relaxation conditions provide sufficient support for the design of feasible kernel functions: all satisfactory kernel functions can realize knowledge congruence. \textcolor{black}{On this basis, kernel functions can be flexibly customized to meet finer distribution requirements, e.g., adopting a convex kernel function to diminish the differences between classes that are not preferred by softmax-normalized knowledge, or adopting a concave kernel function to strengthen the correlation between the preferred class and the first alternative class.}




\subsection{Conversion of KKR to SKR}
\label{conver-kkr-skr}
This section discusses the feasibility of the conversion from KKR to SKR.
We observe that the $t$ in Eq. (\ref{pk}) can be derived by the searching-based method just like how $\theta^*$ in Eq. (\ref{search}) being searched out in section \ref{searching-based}. We define an auxiliary mapping $\rho(t;\cdot)$ with unknown parameter $t$. $\rho(t;z_{{X^k}}^k)_i$ denotes the $i$-th dimension in $\rho(t;z_{{X^k}}^k)$, which can be expressed as:
\begin{equation}
	\rho (t;z^k_{X^k})_i = \frac{{\sigma (\frac{{v_i^k}}{{t \cdot v_m^k}})}}{{\sum\limits_{j = 1}^C {\sigma (\frac{{v_j^k}}{{t \cdot v_m^k}})} }}.
\end{equation}
Then the optimal $t^*$ is to be searched such that the difference between the refined-local knowledge's peak probability and the target peak probability is less than a tolerable upper bound $\frac{\varepsilon }{2}$, which can be given by:
\begin{equation}
	\begin{array}{*{20}{l}}
		{{t^*}: = \mathop {\arg \min }\limits_t  |dist_{KKR} (\rho (t ;z^k_{X^k})) - T|}\\
		{s.t.|dist_{KKR} (\rho (t ;z^k_{X^k})) - T| < \frac{\varepsilon }{2}}.
	\end{array}
\end{equation}
After gaining $t^\ast$, we let:
\begin{equation}
	{\varphi _{KKR}}(z^k_{X^k}) = \rho ({t^*};z^k_{X^k}).
\end{equation}
So far, the final ${\varphi _{KKR}}(\cdot)$ is obtained. Noting that when the Bisection method \cite{corliss1977root} is adopted, the sufficient condition for available $t^*$ to be solved is that:
\begin{equation}
    \label{gen-kernel}
    h(z_{{X^k}}^k;{\epsilon _1}) \cdot h(z_{{X^k}}^k;{\epsilon _2}) < 0,\exists {\epsilon _1},{\epsilon _2},
\end{equation}
where
\begin{equation}
    h(z_{{X^k}}^k;x) = dis{t_{KKR}}(\rho (x;z_{{X^k}}^k)) - T,
\end{equation}
which is practical to satisfy. Up to this point, any kernel function satisfying Eq. (\ref{gen-kernel}) can apply to the KKR-convert-to-SKR strategy.
With the KKR to SKR conversion, our KKR can still work even when we cannot solve out $t$ from Eq. (\ref{pk}), which further promotes the customizability of kernel functions.

\subsection{Superiority and Limitations of KKR and SKR}
This section provides an analysis of the superiority and limitations of KKR and SKR. Even though section \ref{exp-results} empirically demonstrates that KKR outperforms SKR in general, the results are severely constrained by the experimental environment and the knowledge distribution metrics adopted by their respective methodologies. However, when knowledge refinement strategies apply to new data environments or improved knowledge distribution metrics are adopted, the opposite conclusion might be drawn.

According to our argument, KKR can only handle simple target knowledge distribution because it must meet to the crucial requirement that Eq. (\ref{pk}) has a solution and can be worked out. 
The analytical solution to Eq. (\ref{pk}) is frequently not available when complex kernel functions are used to satisfy the structured requirements of the target knowledge distribution (where some KKR problems can only be solved by converting to an SKR problem, as mentioned in section \ref{conver-kkr-skr}); as a result, KKR is not practical under such ordinary circumstances.
In contrast, SKR only requires that Eq. (\ref{root-equ}) has a real root, which is significantly easier to satisfy than Eq. (\ref{pk}) requested by KKR. As a result, SKR outperforms KKR in cases that require complex target knowledge distribution.

It is also worth noting that both SKR and KKR introduce computational overhead on the server side during the global distillation process, where the computation complexity of KKR is linear, and that of SKR is logarithmic (depends on the number of iterations during the parameter searching process in Bisection). Empirically, the computation costs of KKR and SKR are often affordable since they are much lower than that of the server distillation and are borne by the computation-powerful server side.

In summary, KKR is more accurate in our empirical experiments, while SKR enables more flexible setups for target knowledge distribution. 
In addition, the additional computational overhead introduced on the server side by KKR and SKR is acceptable.

\section{Related Work}

\subsection{Knowledge Distillation}
Knowledge distillation (KD) is a teacher-student learning paradigm that transfers the teacher model's knowledge to the student model through distillation.
KD has attracted much attention in ensemble model based aggregation \cite{hinton2015distilling} and cumbersome model compression \cite{wu2021spirit,romero2014fitnets,he2019knowledge,10.1007/978-3-030-82136-4_45,li2020few,peng2019correlation}.
Existing KD methods \cite{hinton2015distilling,peng2019correlation} demonstrate the feasibility that the student model learns the data-to-label representation from the teacher model. 
The subsequent work \cite{anil2018large} extends the distillation technique to exploit the potential for collaboratively optimizing a collection of models \cite{wang2021knowledge}. 
On this foundation, KD is introduced to FL for realizing collaborative training between the server and clients. 
Such distillation-based FL framework is named federated distillation (FD).

\subsection{Federated Distillation}
Typical FD methods \cite{li2019fedmd,itahara2020distillation,chang2019cronus,wu2023survey} exchange model outputs instead of model parameters among clients and the server. 
The server performs an aggregated representation of knowledge from clients and guides clients to converge toward global generalization. 
These methods, however, require a proxy dataset without exception, which is often not available during the FD process.
Recent works devote to dispensing proxy datasets through exchanging additional information, such as global models \cite{lee2021preservation,yao2021local}, generators \cite{zhu2021data}, hash values \cite{wu2023fedcache}, or extracted features \cite{he2020group,wu2023fedict}.
Parameter decentralization-based approaches \cite{lee2021preservation,yao2021local} achieve local distillation by broadcasting model parameters of the server to clients, where clients treat the downloaded global model from the server as the teacher model, and conduct local knowledge distillation based on private data.
The generator-passing-based approach \cite{zhu2021data} uses a lightweight generator to integrate information from clients, which is subsequently broadcast to clients for local training by utilizing the learned knowledge for constrained optimization. Feature-driven approaches \cite{he2020group,wu2023fedict} additionally upload client-side extracted features before global distillation and global knowledge generation.
However, none of these approaches considers that fitting local knowledge with biased distributions negatively affects the global representations under the premise of heterogeneous models among clients.

\section{Conclusion}
This paper proposes a proxy-data-free federated distillation algorithm based on distributed knowledge congruence (FedDKC). 
In our algorithm, incongruent local knowledge from distributed clients is refined to satisfy a similarly-congruent distribution without adding any communication burden. 
Furthermore, we design KKR and SKR strategies to achieve distributed knowledge congruence considering two kinds of knowledge discrepancies: the peak probability and the Shannon entropy of normalized local knowledge. 
As far as we know, this paper is the first work to boost training accuracy while maintaining communication efficiency based on distributed knowledge congruence in proxy-data-free federated distillation. 
Experiments demonstrate that FedDKC effectively improves the training accuracy of individual clients and significantly outperforms related state-of-the-art methods in various heterogeneous settings.

\section*{Acknowledgments}
We thank Prof. Lichao Sun from Lehigh University, USA, Prof. Hong Qi from Jilin University, China, Di Hou from  National University of Singapore, Singapore, Xujin Li, Hui Jiang, Zhiliu Fu, Runhan Li, Hao Tan and Prof. Zhongcheng Li from Institute of Computing Technology, Chinese Academy of Sciences, and Meicheng Liao from Shanghai Jiaotong University, China, for inspiring suggestions.

\bibliographystyle{ACM-Reference-Format}


\newpage
\appendix
\section{Appendix}
\label{proof-theorms}
\subsection{Mapping Negativity of the KKR Strategy without Rectification}
\label{mapping-neg}
\begin{theorem}
	\label{kn-less-zero}
	\textbf{There exists $\bm{{z}_{X^{k*}}^{k*}}$ such that $\bm{{\varphi _{KKR}}({z}_{X^{k*}}^{k*})_i < 0}, \exists i \in \mathcal{C}$.}
	\label{neg}
\end{theorem}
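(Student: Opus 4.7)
The plan is to establish Theorem~\ref{kn-less-zero} by exhibiting an explicit normalized local knowledge vector on which the unrectified formula in Eq.~(\ref{KN-DKC-origin}) yields a negative coordinate. I would begin by inspecting the closed form
\[
\varphi_{KKR}(z_{X^k}^k)_i = \frac{(CT-1)\, v_i^k + v_m^k - T}{C v_m^k - 1},
\]
and asking when its numerator and denominator can carry opposite signs. Since the paper's hyperparameter choice fixes $T$ slightly greater than $1/C$, the factor $CT - 1$ is strictly positive, and the denominator $C v_m^k - 1$ is strictly positive whenever the distribution is non-uniform (so $v_m^k > 1/C$). Hence the only route to negativity is to drive the numerator $(CT-1) v_i^k + v_m^k - T$ below zero, which is easiest at the coordinate attaining the smallest $v_i^k$.

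This motivates the following construction. Fix any $C \ge 3$ and choose $T$ with $1/C < T < 1$, then set $v_m^{k*} = \tfrac{1}{2}(1/C + T)$ so that $1/C < v_m^{k*} < T$. Define $p_{X^{k*}}^{k*}$ by assigning mass $v_m^{k*}$ to a single index, mass $\epsilon > 0$ to another, and distributing the remaining mass uniformly over the other $C-2$ coordinates. For sufficiently small $\epsilon$ and moderate $C$ this probability vector lies in $\mathcal{P}$ with $v_m^{k*}$ as its strict maximum, the small coordinate satisfies $(CT-1)\,\epsilon < T - v_m^{k*}$, and because all entries are strictly positive the vector is realized by some logits $z_{X^{k*}}^{k*}$ via softmax.

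Evaluating Eq.~(\ref{KN-DKC-origin}) at the coordinate of value $\epsilon$, the numerator equals $(CT-1)\,\epsilon + v_m^{k*} - T < 0$ by construction, while the denominator $C v_m^{k*} - 1 > 0$. Therefore $\varphi_{KKR}(z_{X^{k*}}^{k*})_i < 0$, proving the theorem. The only non-trivial book-keeping is verifying that the constructed vector genuinely belongs to $\mathcal{P}$ with a strict maximum at $v_m^{k*}$, which is immediate for $\epsilon$ small and $C$ moderate; no further machinery is required. Intuitively, the statement reflects the fact that the linear rescaling in Eq.~(\ref{KN-DKC-origin}) must overshoot below zero precisely when the prescribed target peak $T$ exceeds the actual peak of the input normalized knowledge, which is exactly what necessitates the rectification step that follows the theorem.
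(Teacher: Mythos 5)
Your route is the same as the paper's: both start from the closed form in Eq.~(\ref{KN-DKC-origin}), observe that $CT-1>0$ and $Cv_m^k-1>0$, and force the numerator $(CT-1)v_i^k+v_m^k-T$ negative by taking $v_i^k$ small while $v_m^k<T$. The paper leaves the witness implicit ("$v_i^{k*}\to 0 \wedge v_m^{k*}<T$"); you try to make it explicit, which is where a genuine gap appears.

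The step you dismiss as "immediate book-keeping" — that your vector lies in $\mathcal{P}$ with strict maximum $v_m^{k*}$ and that $\epsilon$ can be taken small enough — actually fails in the regime you yourself invoke ($T$ slightly greater than $1/C$). If the maximum of a probability vector is $v_m^{k*}$, every coordinate is bounded below by $1-(C-1)v_m^{k*}$; equivalently, your remaining $C-2$ coordinates each receive $\frac{1-v_m^{k*}-\epsilon}{C-2}$, which for small $\epsilon$ exceeds $v_m^{k*}$ unless $(C-1)v_m^{k*}\ge 1$, i.e.\ unless $v_m^{k*}\ge\frac{1}{C-1}$. Since you also need $v_m^{k*}<T$, your construction only goes through when $T>\frac{1}{C-1}$. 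With the paper's defaults ($C=10$, $T=0.11<\frac19$) it breaks: taking $v_m^{k*}=0.105$, the other coordinates each get about $0.112>0.105$, so $v_m^{k*}$ is not the maximum; and more damningly, the smallest admissible coordinate $1-(C-1)v_m^{k*}=0.055$ already exceeds the negativity threshold $\frac{T-v_m^{k*}}{CT-1}=0.05$, so no coordinate can be driven negative at all. (A short computation shows $1-(C-1)v_m>\frac{T-v_m}{CT-1}$ for all $v_m\in(1/C,T)$ precisely when $T<\frac{1}{C-1}$, so the entire $v_m<T$ route — not just your particular spreading of mass — is infeasible there.) To make your argument airtight you must either restrict to $T>\frac{1}{C-1}$, in which case your witness works verbatim, or flag that the theorem's premise is vacuous for $T$ too close to $1/C$. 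I note the paper's own proof silently skips this same feasibility check, so your attempt faithfully reproduces its logic — including its weakest point — but as written your concrete witness does not satisfy the properties you claim for it.
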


\begin{proof}
	Empirically, $p_{{X^k}}^k$ is not a uniform distribution, so there would be:
	\begin{equation}
	    v_m^k > \frac{1}{C},
	\end{equation}
	and thereout,
	\begin{equation}
		C \cdot v_m^k - 1>0.
		\label{cv}
	\end{equation}
	Also, since $T$ is the hyper-parameter that controls the peak probability of normalized knowledge, we empirically set $T>0.1$ with classification category $C\ge 10$. And hence, we have:
	\begin{equation}
		CT - 1>0.
		\label{ct}
	\end{equation}
	We let:
	\begin{equation}
		\begin{aligned}
			&\; \; \; \; \;{\varphi _{KKR}}(z^k_{X^k})_i\\
			&= \frac{{(CT - 1) \cdot v_i^k + v_m^k - T}}{{C \cdot v_m^k - 1}}\\
			&= \frac{{(CT - 1) \cdot (v_i^k + \frac{{v_m^k - T}}{{CT - 1}})}}{{C \cdot v_m^k - 1}}.
		\end{aligned}
	\end{equation}
	Accordingly, based on Eq. (\ref{ct}) and Eq. (\ref{cv}), we can infer that when:
	\begin{equation}
		\label{inequation}
		v_i^{k*} + \frac{{v_m^{k*} - T}}{{CT - 1}} < 0,
	\end{equation}
	there would be ${\varphi _{KKR}}(z_X^{k*})_i < 0$, and Eq. (\ref{inequation}) holds when $v_i^{k*} \to 0 \wedge v_m^{k*} < T$.\\
	Theorem \ref{neg} is proved.
\end{proof}

\subsection{Root Finding in the SKR Strategy}
\label{proof-solve}
\begin{theorem}
	\label{root}
	\textbf{The equation $\bm{H(\psi (\theta ;z^k_{X^k})) - E = 0}$ with unknown variable $\theta$ has a real root, and the root can be figured out using the Bisection method \cite{corliss1977root}.}
\end{theorem}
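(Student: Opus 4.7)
The plan is to establish existence of a real root via the Intermediate Value Theorem and then invoke the standard convergence guarantee of the Bisection method from \cite{corliss1977root}. First I would examine the two limiting behaviors of the tempered-softmax map $\psi(\theta;z^k_{X^k})$ over $\theta\in(0,\infty)$: as $\theta\to 0^+$ the output concentrates on the coordinate(s) attaining $\max_i u_i^k$, so $H(\psi(\theta;z^k_{X^k}))\to 0$; as $\theta\to\infty$ the output tends to the uniform distribution on $C$ classes, so $H(\psi(\theta;z^k_{X^k}))\to\log_2 C$. These two limits bracket the target value $E$, which by the hyper-parameter choice in Section~\ref{exp-setup} satisfies $0 < E < \log_2 C$.

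Second, I would establish continuity of $\theta\mapsto H(\psi(\theta;z^k_{X^k}))$ on $(0,\infty)$. Each coordinate of $\psi(\theta;z^k_{X^k})$ is a smooth function of $\theta$ because it is a ratio of finite sums of exponentials with nonzero denominator, and for every finite $\theta>0$ the image lies in the open simplex, where the Shannon-entropy map $p\mapsto -\sum_i p_i\log_2 p_i$ is continuous. The composition is therefore continuous. Combining this with the limit argument, the Intermediate Value Theorem yields at least one $\theta^*\in(0,\infty)$ solving $H(\psi(\theta^*;z^k_{X^k}))-E=0$.

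Third, I would verify the sign-change hypothesis required by Bisection. Using the two limits, I can pick $\theta_1$ small enough that $H(\psi(\theta_1;z^k_{X^k}))-E<0$ and $\theta_2$ large enough that $H(\psi(\theta_2;z^k_{X^k}))-E>0$, hence the bracketing condition holds on $[\theta_1,\theta_2]$. Classical convergence of Bisection \cite{corliss1977root} on a continuous function with opposite-signed endpoints then produces an approximate root within any prescribed tolerance, in particular within $\tfrac{\varepsilon}{2}$ of $\theta^*$, which is exactly what Eq.~(\ref{search}) requires.

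The main obstacle is handling the degenerate case in which $z^k_{X^k}$ has all equal entries: then $\psi(\theta;z^k_{X^k})$ is the uniform distribution for every $\theta$, so $H(\psi(\theta;z^k_{X^k}))\equiv \log_2 C$ and the equation has a root only when $E=\log_2 C$. This edge case can be side-stepped by noting that the recommended setting of $E$ is strictly below $\log_2 C$ while empirical local logits are never exactly constant across classes, so the strict limits $0$ and $\log_2 C$ are actually approached from the interior; the nondegeneracy assumption underlying KKR in Section~\ref{kn-cong} (empirically unique maximum) carries over and legitimizes the bracketing step.
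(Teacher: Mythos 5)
Your proposal is correct and follows essentially the same route as the paper's proof: establish the two limits $H(\psi(\theta;z^k_{X^k}))\to 0$ as $\theta\to 0^+$ and $H(\psi(\theta;z^k_{X^k}))\to\log_2 C$ as $\theta\to+\infty$, use $0<E<\log_2 C$ together with continuity to obtain a sign change, apply the intermediate value theorem, and then invoke Bisection with tolerance $\tfrac{\varepsilon}{2}$. Your explicit treatment of the degenerate all-equal-logits case is a small refinement the paper leaves implicit (via its assumption of a unique maximum coordinate), but it does not change the underlying argument.
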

\begin{proof}
	Since $E$ is the hyper-parameter that indicates the target entropy of the refined-local knowledge, its empirical value should be taken between the Shannon entropy of the normalized local knowledge subject to a concentrated distribution and that subject to a uniform distribution, which means:
	\begin{equation}
        (C - 1) \cdot \mathop {\lim }\limits_{p \to 0^+} ( - p{\log _2}p) + \mathop {\lim }\limits_{q \to 1^-} ( - q{\log _2}q) 
        <E < C \cdot ( - \frac{1}{C}{\log _2}\frac{1}{C}),
	\end{equation}
	and that is:
	\begin{equation}
		0 < E < {\log _2}C.
	\end{equation}
	We define a continuous function $g(z^k_{X^k};\cdot)$ as follows:
	\begin{equation}
		g(z^k_{X^k};\theta ) = H(\psi (\theta ;z^k_{X^k})) - E.
	\end{equation}
	On the one hand, we have:
	\begin{equation}
		\begin{aligned}
			& \; \; \; \; {\mathop {\lim }\limits_{\theta  \to 0^+} g(z^k_{X^k};\theta )}\\
			&{ = \mathop {\lim }\limits_{\theta  \to 0^+} \sum\limits_{i = 1}^C { - \psi (\theta ;z^k_{X^k})_i \cdot {{\log }_2}\psi (\theta ;z^k_{X^k})_i}  - E}\\
			&{ = \sum\limits_{i = 1}^C { - \mathop {\lim }\limits_{\theta  \to 0^+} \psi 
			(\theta ;z^k_{X^k})_i \cdot {{\log }_2}\psi (\theta ;z^k_{X^k})_i}  - E},
		\end{aligned}
	\end{equation}
	in which
	\begin{equation}
		\begin{aligned}
			& \; \; \; \; \mathop {\lim }\limits_{\theta  \to 0^+} \psi (\theta ;z^k_{X^k})_i\\
			& = \mathop {\lim }\limits_{\theta  \to 0^+} \frac{{\exp (\frac{{u_i^k}}{\theta })}}{{\sum\limits_{j = 1}^C {\exp (\frac{{u_j^k}}{\theta })} }}\\
			& = \frac{{\mathop {\lim }\limits_{\theta  \to 0^+} \exp (\frac{{u_i^k}}{\theta })}}{{\sum\limits_{j = 1}^C {\mathop {\lim }\limits_{\theta  \to 0^+} \exp (\frac{{u_j^k}}{\theta })} }}\\
			& = \frac{{\mathop {\lim }\limits_{\theta  \to 0^+} \exp (\frac{{u_i^k}}{\theta })}}{{\mathop {\lim }\limits_{\theta  \to 0^+} \exp (\frac{{u_m^k}}{\theta })}}\\
			& = \delta (i),
		\end{aligned}
	\end{equation}
		where
	\begin{equation}
		\label{fenduan}
		\delta ({x}) = \left\{ {\begin{array}{*{20}{l}}
				{0,x = m}\\
				{1,x \ne m}
		\end{array}} \right.
		.
	\end{equation}
	Therefore, we have:
	\begin{equation}
		\begin{aligned}
					& \; \; \; {\mathop {\lim }\limits_{\theta  \to 0^+} g(z^k_{X^k};\theta )}\\
							&{ = \sum\limits_{i = 1}^C { - \mathop {\lim }\limits_{\theta  \to 0^+} \psi (\theta ;z^k_{X^k})_i \cdot {{\log }_2}\psi (\theta ;z^k_{X^k})_i}  - E}\\
							&{ = \sum\limits_{i = 1}^C {\delta (i) \cdot ( - \mathop {\lim }\limits_{x \to 0^+} x \cdot {{\log }_2}x)}  - \mathop {\lim }\limits_{x \to 1^-} x \cdot {{\log }_2}x - E}\\
				&=  - E\\
				&< 0.
		\end{aligned}
	\end{equation}
	Due to the sign preserving property of continuous functions, we can infer that there exists $ 0< {\epsilon} < 1$ making that:
	\begin{equation}
	    g(z^k_{X^k};\theta ) < 0,\forall \theta  \in [0,{\epsilon}],
	\end{equation}
	and hence,
	\begin{equation}
	    g(z^k_{X^k};\frac{{\epsilon}}{2})<0.
	\end{equation}
	On the other hand:
	\begin{equation}
		\begin{aligned}
			& \; \; \; \; {\mathop {\lim }\limits_{\theta  \to +\infty } g(z^k_{X^k};\theta )}\\
			& { = \sum\limits_{i = 1}^C { - \mathop {\lim }\limits_{\theta  \to +\infty } \psi (\theta ;z^k_{X^k})_i \cdot {{\log }_2}\psi (\theta ;z^k_{X^k})_i}  - E},
		\end{aligned}
	\end{equation}
	where
	\begin{equation}
	    \begin{aligned}
			& \; \; \; \; \; {\mathop {\lim }\limits_{\theta  \to +\infty } \psi (\theta ;z^k_{X^k})_i}\\
			&	= \mathop {\lim }\limits_{\theta  \to +\infty } \frac{{\exp (\frac{{u_i^k}}{\theta })}}{{\sum\limits_{j = 1}^C {\exp (\frac{{u_j^k}}{\theta })} }}\\
			&	= \frac{{\mathop {\lim }\limits_{\theta  \to +\infty } \exp (\frac{{u_i^k}}{\theta })}}{{\sum\limits_{j = 1}^C {\mathop {\lim }\limits_{\theta  \to +\infty } \exp (\frac{{u_j^k}}{\theta })} }}\\
			&	= \frac{1}{C}.
	    \end{aligned}
	\end{equation}
	As a consequence,
	\begin{equation}
    \begin{aligned}
		& \; \; \; \; {\mathop {\lim }\limits_{\theta  \to +\infty } g(z^k_{X^k};\theta )}\\
		&	{ = \sum\limits_{i = 1}^C { - \mathop {\lim }\limits_{\theta  \to +\infty } \psi (\theta ;z^k_{X^k})_i \cdot {{\log }_2}\psi (\theta ;z^k_{X^k})_i}  - E}\\
		&	{ = \sum\limits_{i = 1}^C { - \mathop {\lim }\limits_{\theta  \to +\infty } \psi (\theta ;z^k_{X^k})_i \cdot {{\log }_2}\mathop {\lim }\limits_{\theta  \to +\infty } \psi (\theta ;z^k_{X^k})_i}  - E}\\
		&	{ = C \cdot ( - \frac{1}{C} \cdot {{\log }_2}\frac{1}{C}) - E}\\
		&		{= {\log _2}C - E}\\
		&		>0.
		\end{aligned}
	\end{equation}
	According to the definition of limit, we can infer that for the positive real number ${\log _2}C - E \in {\bm{R^ +} }$, there exists $ M \in \bm{R^ + }$, such that:
	\begin{equation}
	     |g(z^k_{X^k};\theta ) - ({\log _2}C - E)| < {\log _2}C - E,\forall \theta  > M.
	\end{equation}
Since ${e^M} > M$, we have:
\begin{equation}
    |g(z^k_{X^k};\theta ) - ({\log _2}C - E)| < {\log _2}C - E,\forall \theta  > {e^M},
\end{equation}
and that is:
\begin{equation}
0 < g(z^k_{X^k};\theta ) < 2{\log _2}C - 2E,\forall \theta  > {e^M},
\end{equation}
and then, we have:
\begin{equation}
    g(z^k_{X^k};2{e^M}) > 0.
\end{equation}
In summary, there exists $\epsilon \in (0,1)$ and $ M \in \bm{R^ + }$ such that:
	\begin{equation}
        g(z^k_{X^k};\epsilon) \cdot g(z^k_{X^k};2{e^M}) < 0,
	\end{equation}
	in which $\frac{\epsilon}{2}<1<2{e^M}$. Hence, according to the existence theorem of zero points, $g(z^k_{X^k};\cdot)$ must have a zero in the interval $(\frac{\epsilon}{2}, 2{e^M} )$, and the zero is also the root of the equation ${H(\psi (\theta ;z^k_{X^k})) - E = 0}$.  \\
	When taking $(\frac{\epsilon}{2}, 2{e^M})$ as the input interval, $\frac{\varepsilon }{2}$ as the tolerable error, an approximate real root can be found by adopting the Bisection method \cite{corliss1977root}. Empirically, when a searching lower bound close to zero and a reasonably big searching upper bound is taken, we can always obtain an available $\theta^*$ as the approximated real root.\\
	Theorem \ref{root} is proved.
\end{proof}

\subsection{Proof of Knowledge Refinement Properties}
\label{proof-prop}
\textit{\textbf{1) Probabilistic Projectivity}}
\begin{theorem}
	\textbf{In KKR, the refined-local knowledge is in probability space.}
	\label{thm-1}
\end{theorem}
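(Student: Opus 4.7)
The plan is to verify, for the (rectified) refinement $\varphi_{KKR}(z^k_{X^k})$, each of the three defining conditions of the probability space $\mathcal{P}$: the coordinates are real, they lie in $[0,1]$, and they sum to $1$. Since the rectification splits into two disjoint cases, I would treat them separately and then combine.

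First I would dispose of the rectified case, where some coordinate of the raw formula (\ref{KN-DKC-origin}) is negative. Here the maximum coordinate is set to $T$ and the remaining $C-1$ coordinates are set to $\tfrac{1-T}{C-1}$. I would check that with the empirical hypothesis $T\in(\tfrac{1}{C},1)$ already used in the paper (e.g., in appendix \ref{mapping-neg}), both $T$ and $\tfrac{1-T}{C-1}$ lie in $[0,1]$, and their sum $T+(C-1)\cdot\tfrac{1-T}{C-1}=1$. So probabilistic projectivity in this branch is immediate.

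Next I would handle the unrectified case, where every coordinate produced by Eq.~(\ref{KN-DKC-origin}) is already non-negative by the case condition. The key step is the sum identity: using $\sum_{i=1}^C v_i^k=1$, compute
\begin{equation}
\sum_{i=1}^C \varphi_{KKR}(z^k_{X^k})_i \;=\; \frac{(CT-1)\sum_i v_i^k + C(v_m^k-T)}{C v_m^k - 1} \;=\; \frac{(CT-1) + C v_m^k - CT}{C v_m^k - 1} \;=\; 1.
\end{equation}
Combined with non-negativity of each coordinate (the hypothesis of the unrectified branch) and the sum being $1$, each coordinate automatically lies in $[0,1]$, so $\varphi_{KKR}(z^k_{X^k})\in\mathcal{P}$.

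Finally I would conclude by observing that the two cases exhaust all possibilities of the rectified definition, so $\varphi_{KKR}(z^k_{X^k})\in\mathcal{P}$ always. I do not anticipate a real obstacle here: the argument is essentially a telescoping check of the sum plus a case analysis. The only subtlety worth flagging explicitly is that the denominator $Cv_m^k-1$ is strictly positive under the empirical assumption $v_m^k>\tfrac{1}{C}$ (already invoked in Eq.~(\ref{cv}) of appendix \ref{mapping-neg}), which ensures the formula is well-defined before the rectification kicks in.
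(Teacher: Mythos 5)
Your proposal is correct and follows essentially the same route as the paper's proof: the same case split on whether rectification is triggered, the same telescoping computation of the sum using $\sum_i v_i^k = 1$, and the same bounds on $T$ and $\tfrac{1-T}{C-1}$; you merely organize the argument case-first rather than property-first. The one observation you add that the paper leaves implicit in the unrectified branch—that non-negative coordinates summing to $1$ are automatically each at most $1$—is a slight streamlining of the paper's explicit subtraction argument, but not a different method.
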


\begin{proof}
	First, we prove that $\sum\limits_{i = 1}^C {{\varphi _{KKR}}(z_{{X^k}}^k)_i = 1}$.\\
	\textbf{Case \ref{thm-1}.1.1.}\\
	When ${\varphi _{KKR}}(z^k_{X^k})_i \ge 0,\forall i \in \mathcal{C}$, we calculate the sum of all dimensions in the refined-local knowledge, which can be given by:
	\begin{equation}
	    \begin{aligned}
	        &\; \; \;\sum\limits_{i = 1}^C {\varphi _{KKR}(z^k_{X^k})_i}\\
	        &{ = \sum\limits_{i = 1}^C {\frac{{(CT - 1) \cdot {v_i^k} + v_m^k - T}}{{C \cdot v_m^k - 1}}} }\\
	        &{ = \frac{{(CT - 1) \cdot \sum\limits_{i = 1}^C {{v_i^k}} }}{{C \cdot v_m^k - 1}} + \frac{{C \cdot (v_m^k - T)}}{{C \cdot v_m^k - 1}}}.\\
	    \end{aligned}
	\end{equation}
	Since the softmax-normalized knowledge satisfies:
	\begin{equation}
		\sum {p_{X^k}^k}  = \sum\limits_{i = 1}^C {{v_i^k}}  = 1,
	\end{equation}
	hence, we have:
	\begin{equation}
		\begin{aligned}
			&\; \; \; \; \;\sum\limits_{i = 1}^C {\varphi _{KKR}(z^k_{X^k})_i}\\
			&=\frac{{(CT - 1) \cdot \sum\limits_{i = 1}^C {{v_i^k}} }}{{C \cdot v_m^k - 1}} + \frac{{C \cdot (v_m^k - T)}}{{C \cdot v_m^k - 1}}\\
			&= \frac{{CT - 1 + C \cdot (v_m^k - T)}}{{C \cdot v_m^k - 1}}\\
			&=1.
		\end{aligned}
	\end{equation}
	\textbf{Case \ref{thm-1}.1.2}\\
	When ${\varphi _{KKR}}(z^k_{X^k})_i < 0,\exists i \in \mathcal{C}$, the rectified $\varphi _{KKR}(\cdot)$ is adopted, which means:
	\begin{equation}
		\begin{aligned}
			& \; \; \; \; \;\sum\limits_{i = 1}^C {{{\varphi }_{KKR}}(z^k_{X^k})_i} \\
			&= \sum\limits_{i = 1}^C {\delta (i) \cdot \frac{{1 - T}}{{C - 1}} + T} \\
			&= (C - 1) \cdot \frac{{1 - T}}{{C - 1}} + T\\
			&= 1.
		\end{aligned}
	\end{equation}
	In summary, $\sum\limits_{i = 1}^C {{\varphi _{KKR}}(z_{{X^k}}^k)_i = 1}$ is proved.
	Then, we prove that $0\le{\varphi _{KKR}(z^k_{X^k})_i} \le 1, \forall i \in \mathcal{C}$.\\
	\textbf{Case \ref{thm-1}.2.1.}\\
	When ${\varphi _{KKR}}(z^k_{X^k})_i \ge 0,\forall i \in \mathcal{C}$, we have:
	\begin{equation}
	\begin{aligned}
		& \; \; \; \; {\varphi _{KKR}}(z^k_{X^k})_i - 1\\
		&  = {\varphi _{KKR}}(z^k_{X^k})_i - \sum\limits_{i = 1}^C {\varphi _{KKR}}(z_{{X^k}}^k)_i\\
		& { =  - \sum\limits_{j = 1}^{i - 1} {{\varphi _{KKR}}(z^k_{X^k})_j - } \sum\limits_{j = i + 1}^C {{\varphi _{KKR}}(z^k_{X^k})_j} }\\
		& { \le 0},
	\end{aligned}
	\end{equation}
	 and hence, we have $0 \le {\varphi _{KKR}}(z^k_{X^k})_i \le 1$ in this case.\\
	 \textbf{Case \ref{thm-1}.2.2.}\\
	 When ${\varphi _{KKR}}(z^k_{X^k})_i < 0,\exists i \in \mathcal{C}$, we consider the rectified form of ${\varphi _{KKR}(\cdot)}$, that is:
	 \begin{equation}
	 \label{kkr-set}
	 	{\varphi _{KKR}}(z^k_{X^k})_i \in \{T,\frac{{1 - T}}{{C - 1}}\},\forall i \in \mathcal{C}.
	 \end{equation}
	As hyper-parameter $T$ indicates the target peak probability of the refined-local knowledge, and $C$ denotes the number of classes, they empirically satisfy the following conditions:
	\begin{equation}
		\label{T-exp}
		\frac{1}{C} < T < 1,
	\end{equation}
	\begin{equation}
		\label{C-exp}
		C \ge 10 \wedge C \in \bm{Z^+},
	\end{equation}
	where $\bm{Z^+}$ is the set of positive integers. 	From Eq. (\ref{T-exp}), we have: 
	\begin{equation}
	    \label{kkr-1}
	    0 \le T \le 1.
	\end{equation}
	From Eq. (\ref{T-exp}) and Eq. (\ref{C-exp}), we can easily figure out that:
	\begin{equation}
	    1-T \le 0 \wedge C-1>0,
	\end{equation}
	and hence,
	\begin{equation}
	\label{kkr-2}
	    \frac{{1 - T}}{{C - 1}} \ge 0.
	\end{equation}
	Besides, we have:
	\begin{equation}
		\begin{aligned}
			& \; \; \;\frac{{1 - T}}{{C - 1}} - 1\\
			& = \frac{{1 - T - C + 1}}{{C - 1}}\\
			& < \frac{{ - C + 1}}{{C - 1}}\\
			& \le 0.
		\end{aligned}
	\end{equation}
	Therefore, we can get that:
	\begin{equation}
	    \label{kkr-3}
	    \frac{{1 - T}}{{C - 1}} \le 1.
	\end{equation}
	Based on Eq. (\ref{kkr-set}), Eq. (\ref{kkr-1}), Eq. (\ref{kkr-2}) and Eq. (\ref{kkr-3}), $0 \le {\varphi _{KKR}}(z^k_{X^k})_i \le 1,\forall {i \in \mathcal{C}}$ is proved.
	Combines the above two proofs, we have ${\varphi _{KKR}(z^k_{X^k})} \in \mathcal{P}$.\\
	 Theorem \ref{thm-1} is proved.
\end{proof}

\begin{theorem}
\textbf{In SKR, the refined-local knowledge is in probability space.}
	\label{thm-2}
\end{theorem}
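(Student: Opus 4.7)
The plan is to unpack the definition $\varphi_{SKR}(z^k_{X^k}) = \psi(\theta^\ast; z^k_{X^k})$ and verify the three defining conditions of $\mathcal{P}$ directly from the tempered-softmax form in Eq.~(\ref{equ1}). Because Theorem~\ref{root} already guarantees the existence of a valid $\theta^\ast \in \mathbb{R}^+$, I can treat $\theta^\ast$ as a fixed positive real and reduce the claim to showing that, for any positive $\theta$, the mapping $z^k_{X^k} \mapsto \psi(\theta; z^k_{X^k})$ lands in $\mathcal{P}$.

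First, I would establish dimension-wise non-negativity: since $\exp(u_i^k/\theta^\ast) > 0$ for every $i$ and every finite logit $u_i^k$, both the numerator and the denominator in the definition of $\psi(\theta^\ast; z^k_{X^k})_i$ are strictly positive, so each $\psi(\theta^\ast; z^k_{X^k})_i > 0 \geq 0$. Next, I would bound each component above by $1$ by noting that the denominator $\sum_{j=1}^{C} \exp(u_j^k/\theta^\ast)$ contains the numerator $\exp(u_i^k/\theta^\ast)$ as one of its $C$ summands, all of which are positive; hence the ratio cannot exceed one.

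Finally, to verify the normalization $\sum_{i=1}^{C} \psi(\theta^\ast; z^k_{X^k})_i = 1$, I would factor out the common denominator, giving
\begin{equation}
\sum_{i=1}^{C} \psi(\theta^\ast; z^k_{X^k})_i = \frac{\sum_{i=1}^{C} \exp(u_i^k/\theta^\ast)}{\sum_{j=1}^{C} \exp(u_j^k/\theta^\ast)} = 1,
\end{equation}
since the two sums are identical after a relabeling of indices. Combining the three bullets shows $\psi(\theta^\ast; z^k_{X^k}) \in \mathcal{P}$, and by Eq.~(\ref{equ3}) this is exactly $\varphi_{SKR}(z^k_{X^k}) \in \mathcal{P}$.

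I do not anticipate a substantive obstacle here; the statement is essentially the well-known fact that tempered softmax outputs lie on the probability simplex. The only subtlety worth mentioning is to justify that $\theta^\ast$ is strictly positive and finite so that $\exp(u_i^k/\theta^\ast)$ is well defined --- this follows from the root-finding interval $(\tfrac{\epsilon}{2}, 2e^M) \subset \mathbb{R}^+$ constructed in the proof of Theorem~\ref{root}, which I would cite rather than re-derive.
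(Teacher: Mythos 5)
Your proposal is correct and follows essentially the same route as the paper's proof: positivity of each component from $\exp(\cdot)>0$, the upper bound from the numerator being one of the $C$ positive summands in the denominator (the paper phrases this as $\varphi_{SKR}(z^k_{X^k})_i - 1 \le 0$ after subtracting the full sum, which is the same observation), and normalization by cancelling the common denominator. Your added remark about citing Theorem~\ref{root} to guarantee $\theta^\ast$ is a finite positive real is a sensible touch of rigor that the paper leaves implicit.
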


\begin{proof}
    Define ${\varphi _{SKR}}(z_{{X^k}}^k)_i$ as the $i$-th dimension in ${{\varphi _{SKR}}(z_{{X^k}}^k)}$.
	We should first prove that $\sum\limits_{i = 1}^C {{\varphi _{SKR}}(z_{{X^k}}^k)_i} = 1$.
	\begin{equation}
		\begin{aligned}
			& \; \; \; \; \sum\limits_{i = 1}^C {{\varphi _{SKR}}(z_{{X^k}}^k)_i}\\
			& = \sum\limits_{i = 1}^C {\psi ({\theta ^*};z^k_{X^k})_i} \\
			& = \sum\limits_{i = 1}^C {\frac{{\exp (\frac{{u_i^k}}{\theta^* })}}{{\sum\limits_{j = 1}^C {\exp (\frac{{u_j^k}}{\theta^* })} }}} \\
			& = 1.
		\end{aligned}
	\end{equation}
	Then, we prove that $0 \le {\varphi _{SKR}}(z_{{X^k}}^k)_i \le 1, \forall i \in \mathcal{C}$.\\
	On the one hand, since the following inequations are always true:
	\begin{equation}
		{\exp (\frac{{u_i^k}}{\theta })} >0,
	\end{equation}
	\begin{equation}	
		{\sum\limits_{j = 1}^C {\exp (\frac{{u_j^k}}{\theta })} }>0,
	\end{equation}
	we can infer that:
	\begin{equation}
	\label{skr-1}
		{\varphi _{SKR}}(z_{{X^k}}^k)_i = \frac{{\exp (\frac{{u_i^k}}{{{\theta ^*}}})}}{{\sum\limits_{j = 1}^C {\exp (\frac{{u_j^k}}{{{\theta ^*}}})} }} > 0 \ge 0.
	\end{equation}
	On the other hand, 
	\begin{equation}
	\label{skr-2}
	\begin{aligned}
					& \; \; \; \; \;{\varphi _{SKR}}(z_{{X^k}}^k)_i\\
					& = \frac{{\exp (\frac{{u_i^k}}{{{\theta ^*}}})}}{{\sum\limits_{j = 1}^C {\exp (\frac{{u_j^k}}{{{\theta ^*}}})} }} - 1\\
					& = \frac{{\sum\limits_{j = 1}^{i - 1} {\exp (\frac{{u_j^k}}{{{\theta ^*}}}) + \sum\limits_{j = i + 1}^C {\exp (\frac{{u_j^k}}{{{\theta ^*}}})} } }}{{\sum\limits_{j = 1}^C {\exp (\frac{{u_j^k}}{{{\theta ^*}}})} }}\\
					& =  - \sum\limits_{j = 1}^{i - 1} {{\varphi _{SKR}} (z^k_{X^k})_j - \sum\limits_{j = i + 1}^C {{\varphi _{SKR}} (z^k_{X^k})_j} } \\
				    & \; { \le 0}.
	\end{aligned}
	\end{equation}
	To this end, based on Eq. (\ref{skr-1}) and Eq. (\ref{skr-2}), ${0 \le {\varphi _{SKR}}(z_{{X^k}}^k)_i \le 1},\forall i \in \mathcal{C}$ is proved.
	Combines the above two proofs, we have $\varphi_{SKR} (z^k_{X^k}) \in \mathcal{P}$.\\
	Theorem \ref{thm-2} is proved.
\end{proof}

\textit{\textbf{2) Invariant Relations}}
\begin{theorem}
	\textbf{KKR do not change the order of numeric value among all elements in local knowledge.}
	\label{thm-3}
\end{theorem}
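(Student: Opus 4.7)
The plan is to reduce the claim to two sub-cases matching the two branches of the rectified KKR map described right after Eq. (\ref{KN-DKC-origin}). First I would observe that the softmax $\tau(\cdot)$ is strictly monotonic coordinate-wise, so $u_i^k \ge u_j^k$ immediately yields $v_i^k \ge v_j^k$ for the entries of $p^k_{X^k}=\tau(z^k_{X^k})$. This lets me work entirely with the $v_i^k$ and ignore the original logits for the rest of the argument.

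For the non-negative branch, where every dimension of Eq. (\ref{KN-DKC-origin}) is nonnegative and the formula is used unchanged, I would compute
\begin{equation*}
\varphi_{KKR}(z^k_{X^k})_i - \varphi_{KKR}(z^k_{X^k})_j = \frac{(CT-1)(v_i^k-v_j^k)}{C\cdot v_m^k-1}.
\end{equation*}
The sign of this difference is determined by three quantities: $CT-1$, $C\cdot v_m^k-1$, and $v_i^k-v_j^k$. The first is positive by the empirical hyper-parameter regime $T>1/C$ already invoked in Eq. (\ref{ct}); the second is positive because $m$ indexes the peak of a non-uniform distribution, so $v_m^k>1/C$ as in Eq. (\ref{cv}); the third is nonnegative by the softmax monotonicity step above. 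Hence the difference is $\ge 0$ and order is preserved.

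For the rectified branch, the refined vector takes only two values: $T$ at coordinate $m$ and $\frac{1-T}{C-1}$ everywhere else. Here I would verify two things: (i) $T > \frac{1-T}{C-1}$, which rearranges to $CT>1$ and again follows from $T>1/C$, so the former peak coordinate is still the unique largest; and (ii) for any $i,j \ne m$, the refined values coincide, so the required inequality holds trivially as an equality. The only mixed situation is $i = m$ versus $j \ne m$, which by softmax monotonicity necessarily satisfies $u_m^k \ge u_j^k$, consistent with $T \ge \frac{1-T}{C-1}$.

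The argument is short and the main subtlety is not an analytic difficulty but a bookkeeping one: I need to make sure the rectification cannot swap the identity of the arg-max, i.e.\ that the coordinate set to $T$ after rectification is the same index $m$ used in the unrectified formula. Since the rectification is defined precisely to place $T$ on the original $m$-th coordinate and spread the rest uniformly, this is immediate from the construction; spelling that out carefully is the one place I would be explicit to avoid ambiguity when the pre-rectified formula produced negative entries on multiple coordinates.
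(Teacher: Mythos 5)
Your proposal is correct and follows essentially the same route as the paper's own proof: first establish order preservation under softmax, then treat the unrectified branch via the sign of $\frac{(CT-1)(v_i^k-v_j^k)}{C\cdot v_m^k-1}$ using $CT-1>0$ and $C\cdot v_m^k-1>0$, and finally check the rectified branch by the case analysis on whether $i$ or $j$ equals $m$, using $T>\frac{1-T}{C-1}$. The only difference is cosmetic: you flag explicitly that rectification keeps the value $T$ on the original index $m$, a point the paper leaves implicit.
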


\begin{proof}
    We first prove that the softmax mapping do not change the order of numeric value among all elements in local  knowledge.\\
    For $\forall u_i^k \ge u_j^k$, we have:
    \begin{equation}
        \begin{aligned}
            &\; \; \; \; \;v_i^k - v_j^k\\
             &= \frac{{\exp (u_i^k)}}{{\sum\limits_{l = 1}^C {\exp (u_l^k)} }} - \frac{{\exp (u_j^k)}}{{\sum\limits_{l = 1}^C {\exp (u_l^k)} }}\\
            &= \frac{{\exp (u_i^k) - \exp (u_j^k)}}{{\sum\limits_{l = 1}^C {\exp (u_l^k)} }}.
        \end{aligned}
    \end{equation}
    Since $\exp(\cdot)$ is a monotonically increasing function, there is always be:
    \begin{equation}
        \exp (u_i^k) - \exp (u_j^k) \ge 0,\forall u_i^k \ge u_j^k.
    \end{equation}
    As a result, we have:
    \begin{equation}
        \label{soft-relation}
        v_i^k \ge v_j^k,\forall u_i^k \ge u_j^k.
    \end{equation}
    Next, we need to prove that:
    \begin{equation}
    \label{kkr-re}
        {\varphi _{KKR}}(z_{{X^k}}^k)_i \ge {\varphi _{KKR}}(z_{{X^k}}^k)_j, \forall v_i^k \ge v_j^k.
    \end{equation}
    We consider the proof of Eq. (\ref{kkr-re}) in the following cases:\\
	\textbf{Case \ref{thm-3}.1.}\\
	When ${\varphi _{KKR}}(z^k_{X^k})_i \ge 0,\forall i \in \mathcal{C}$. At this point, for $\forall v_i^k \ge v_j^k$, we can infer that:
	\begin{equation}
		\begin{aligned}
		& \; \; \; \; \; \varphi_{KKR} ({z^k_{X^k}})_i - \varphi_{KKR} (z^k_{X^k})_j \\
		& = \frac{{(CT - 1) \cdot {v_i^k} + v_m^k - T}}{{C \cdot v_m^k - 1}} - \frac{{(CT - 1) \cdot {v_j^k} + v_m^k - T}}{{C \cdot v_m^k - 1}}\\ 
		& = \frac{{(CT - 1) \cdot ({v_i^k} - v_j^k)}}{{C \cdot v_m^k - 1}}.
		\end{aligned}
	\end{equation}
	With Eq. (\ref{cv}), (\ref{ct}) and the precondition ${v_i^k} \ge {v_j^k}$, we can infer that:
	\begin{equation}
	    \begin{aligned}
        &\; \; \; \; \;{\varphi _{KKR}}(z_{{X^k}}^k)_i - {\varphi _{KKR}}(z_{{X^k}}^k)_j\\
         &= \frac{{(CT - 1) \cdot (v_i^k - v_j^k)}}{{C \cdot v_m^k - 1}}\\
         &\ge 0,
\end{aligned}
	\end{equation}
	and hence, we can gain:
	\begin{equation}
	    {\varphi _{KKR}}(z^k_{X^k})_i - {\varphi _{KKR}}(z^k_{X^k})_j \ge 0,\forall v_i^k \ge v_j^k.
	\end{equation}
	\textbf{Case \ref{thm-3}.2.}\\
	When ${\varphi _{KKR}}(z^k_{X^k})_j < 0,\exists i \in \mathcal{C}$ in which ${\varphi _{KKR}}(\cdot)$ is rectified, three cases should be taken into considerations.\\
	\textbf{Case \ref{thm-3}.2.1.}\\
	When $i = m \wedge j = m$, we have:
	\begin{equation}
	    {\varphi _{KKR}}{(z_{{X^k}}^k)_i} = \varphi_{KKR} (z^k_{X^k})_j = T,
	\end{equation}
	which means ${\varphi _{KKR}}{(z_{{X^k}}^k)_i} \ge \varphi_{KKR} (z^k_{X^k})_j$ is workable.\\
	\textbf{Case \ref{thm-3}.2.2.}\\
	When $i \ne m \wedge j \ne m$, we have: 
	\begin{equation}
	    {\varphi _{KKR}}{(z_{{X^k}}^k)_i} = \varphi_{KKR} (z^k_{X^k})_j = \frac{{1 - T}}{{C - 1}},
	\end{equation}
	which means ${\varphi _{KKR}}{(z_{{X^k}}^k)_i} \ge \varphi_{KKR} (z^k_{X^k})_j$ is workable.\\
	\textbf{Case \ref{thm-3}.2.3.}\\
	When $i = m \wedge j \ne m$, following Eq. (\ref{ct}), we can infer that:
		\begin{equation}
            {\varphi _{KKR}}(z_{{X^k}}^k)_i = T = \frac{{TC - T}}{{C - 1}}
            > \frac{{1 - T}}{{C - 1}} = {\varphi _{KKR}}(z_{{X^k}}^k)_j,
			\label{t-greater}
		\end{equation}
		and ${\varphi _{KKR}}{(z_{{X^k}}^k)_i} \ge \varphi_{KKR} (z^k_{X^k})_j$ is workable as well.\\
	So far, we can prove:
	\begin{equation}
	    {\varphi _{KKR}}(z_{{X^k}}^k)_i \ge {\varphi _{KKR}}(z_{{X^k}}^k)_j,\forall v_i^k \ge v_j^k.
	\end{equation}
	Combined with Eq. (\ref{soft-relation}), we can prove that:
	\begin{equation}
	    {\varphi _{KKR}}(z_{{X^k}}^k)_i \ge {\varphi _{KKR}}(z_{{X^k}}^k)_j,\forall u_i^k \ge u_j^k.
	\end{equation}
	Theorem \ref{thm-3} is proved.
\end{proof}

\begin{theorem}
	\label{shrelation}
	\textbf{SKR do not change the order of numeric value among all elements in local knowledge.}
\end{theorem}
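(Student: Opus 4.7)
The plan is to reduce the statement to a monotonicity property of $\exp(\cdot)$ combined with the positivity of the searched parameter $\theta^*$. Since SKR refines via $\varphi_{SKR}(z^k_{X^k}) = \psi(\theta^*; z^k_{X^k})$ where
\begin{equation}
\psi(\theta^*; z^k_{X^k})_i = \frac{\exp(u_i^k/\theta^*)}{\sum_{l=1}^C \exp(u_l^k/\theta^*)},
\end{equation}
the denominator is a positive constant common to every coordinate $i$. Consequently, for indices $i, j$ with $u_i^k \geq u_j^k$, the difference $\varphi_{SKR}(z^k_{X^k})_i - \varphi_{SKR}(z^k_{X^k})_j$ has the same sign as $\exp(u_i^k/\theta^*) - \exp(u_j^k/\theta^*)$. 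Hence the task reduces to showing this exponential difference is non-negative.

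The step requiring care is establishing the sign of $\theta^*$. If $\theta^* > 0$, then $u_i^k/\theta^* \geq u_j^k/\theta^*$, and since $\exp(\cdot)$ is strictly increasing on $\mathbb{R}$, we obtain $\exp(u_i^k/\theta^*) \geq \exp(u_j^k/\theta^*)$, which yields the claim. If instead $\theta^* < 0$, dividing by $\theta^*$ would reverse the inequality and the result would fail. So the main obstacle is to guarantee $\theta^* > 0$; I will handle this by appealing to Theorem \ref{root}, where the Bisection method is applied to the interval $(\epsilon/2,\ 2e^M)$ whose endpoints are both strictly positive. Every iterate of the bisection, and hence the returned approximate root $\theta^*$, therefore lies in a positive subinterval, ensuring $\theta^* > 0$.

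Putting the pieces together, I would structure the proof as: (i) quote $\theta^* > 0$ from the bisection construction in Theorem \ref{root}; (ii) note that for arbitrary $i, j \in \mathcal{C}$ with $u_i^k \geq u_j^k$, the monotonicity of $x \mapsto x/\theta^*$ gives $u_i^k/\theta^* \geq u_j^k/\theta^*$; (iii) apply the monotonicity of $\exp(\cdot)$ to conclude $\exp(u_i^k/\theta^*) \geq \exp(u_j^k/\theta^*)$; (iv) divide by the common positive normalizing constant $\sum_{l=1}^C \exp(u_l^k/\theta^*)$ to obtain $\psi(\theta^*; z^k_{X^k})_i \geq \psi(\theta^*; z^k_{X^k})_j$, which by definition is $\varphi_{SKR}(z^k_{X^k})_i \geq \varphi_{SKR}(z^k_{X^k})_j$. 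This is the invariant-relations property required in Section \ref{constrains}, completing the proof. The argument is essentially two lines once positivity of $\theta^*$ is in hand, so I would keep the write-up compact and flag the dependence on Theorem \ref{root} explicitly.
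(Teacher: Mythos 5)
Your proposal is correct and follows essentially the same route as the paper's proof: express the coordinate difference as $\bigl(\exp(u_i^k/\theta^*)-\exp(u_j^k/\theta^*)\bigr)/\sum_{l=1}^C\exp(u_l^k/\theta^*)$ and invoke the monotonicity of $\exp(\cdot)$. The only difference is that you explicitly justify $\theta^*>0$ via the positive bisection interval from Theorem \ref{root}, a step the paper's proof takes for granted when it writes $u_i^k/\theta^*\ge u_j^k/\theta^*$; this is a worthwhile clarification but not a different argument.
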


\begin{proof}
	For $\forall u_i^k \ge u_j^k$, we have:
	\begin{equation}
		\begin{aligned}
			& \; \; \; \; \; {\varphi _{SKR}}(z^k_{X^k})_i - {\varphi _{SKR}}(z^k_{X^k})_j\\
			& = \frac{{\exp (\frac{{u_i^k}}{{{\theta ^*}}}) - \exp (\frac{{u_j^k}}{{{\theta ^*}}})}}{{\sum\limits_{l = 1}^C {\exp (\frac{{u_l^k}}{{{\theta ^*}}})} }}.
		\end{aligned}
	\end{equation}
	Since$\frac{{u_i^k}}{{{\theta ^*}}} \ge \frac{{u_j^k}}{{{\theta ^*}}}$,we have:
	\begin{equation}
		\exp (\frac{{u_i^k}}{{{\theta ^*}}}) - \exp (\frac{{u_j^k}}{{{\theta ^*}}}) \ge 0.
	\end{equation}
	Hence,
	\begin{equation}
			{\frac{{\exp (\frac{{u_i^k}}{{{\theta ^*}}}) - \exp (\frac{{u_j^k}}{{{\theta ^*}}})}}{{\sum\limits_{l = 1}^C {\exp (\frac{{u_l^k}}{{{\theta ^*}}})} }}}
		 \ge 0.
	\end{equation}
	In summary, we can always get ${\varphi _{KKR}}{(z_{{X^k}}^k)_i} \ge \varphi_{KKR} (z^k_{X^k})_j$ when ${\forall {v_i^k} \ge {v_j^k}}$. \\
	Theorem \ref{shrelation} is proved.
\end{proof}

\textit{\textbf{3) Bounded Dissimilarity}}
\begin{theorem}
	\textbf{After refining by KKR, the knowledge discrepancy between arbitrating clients should satisfy an acceptable theoretical upper bound $\bm{\varepsilon}$.}
	\label{peak}
\end{theorem}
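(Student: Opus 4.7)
The plan is to reduce the theorem to the stronger observation that $dist_{KKR}(\varphi_{KKR}(z^k_{X^k})) = T$ for every client $k$, so that the pairwise discrepancy collapses to $0$ and trivially satisfies $|T - T| = 0 < \varepsilon$ for any positive tolerance. This requires verifying that the refined peak probability hits the target value $T$ on the nose in both cases of the rectified KKR construction described just before Theorem \ref{thm-1}.

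In the unrectified case, where every coordinate of $\varphi_{KKR}(z^k_{X^k})$ obtained from Eq. (\ref{KN-DKC-origin}) is non-negative, I would substitute $v_i^k = v_m^k$ into Eq. (\ref{phi-init}) and invoke the defining constraint Eq. (\ref{pk}); this yields $\varphi_{KKR}(z^k_{X^k})_m = T$ directly by construction. Combining this with the Invariant Relations property proven in Theorem \ref{thm-3}, which guarantees that the index $m = \arg\max_i v_i^k$ remains the argmax of the refined vector, gives $\max \varphi_{KKR}(z^k_{X^k}) = T$. In the rectified case, the construction explicitly assigns the maximum dimension the value $T$ and the remaining $C-1$ dimensions the common value $(1-T)/(C-1)$; a one-line algebraic check using the empirical setting $T > 1/C$ shows $T(C-1) > 1 - T$, so $T > (1-T)/(C-1)$ and the peak again equals $T$.

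Putting the two cases together, $dist_{KKR}(\varphi_{KKR}(z^k_{X^k})) = T$ holds uniformly over $k \in \mathcal{K}$, hence $|dist_{KKR}(\varphi_{KKR}(z^k_{X^k})) - dist_{KKR}(\varphi_{KKR}(z^l_{X^l}))| = 0 < \varepsilon$ for any $k, l \in \mathcal{K}$ and any $\varepsilon > 0$. The only delicate point is confirming that the argmax is preserved in the unrectified branch (otherwise the peak need not equal the $m$-th coordinate), but this is exactly what Theorem \ref{thm-3} already supplies, so no genuine obstacle remains and the proof is essentially a two-case bookkeeping argument.
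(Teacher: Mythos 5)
Your proposal is correct and follows essentially the same route as the paper's proof: both reduce the claim to showing the refined peak probability equals $T$ exactly, splitting into the unrectified case (where the constraint Eq. (\ref{pk}) together with the argmax preservation from Theorem \ref{thm-3} gives $\max\varphi_{KKR}(z^k_{X^k})=\varphi_{KKR}(z^k_{X^k})_m=T$) and the rectified case (where $\max(T,\frac{1-T}{C-1})=T$ follows from $T>\frac{1}{C}$), so the pairwise discrepancy is $|T-T|=0<\varepsilon$. No gaps.
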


\begin{proof}
	We first prove that the peak probability of the knowledge refined by KKR is always $T$. Two cases are taken into consideration.\\
	\textbf{Case 6.1.}\\
	When ${\varphi _{KKR}}(z^k_{X^k})_i \ge 0,\forall i \in \mathcal{C}$, according to Theorem \ref{thm-3}, we have:
	\begin{equation}
	\label{rkdc}
		\begin{aligned}
			& \; \; \; \; \max ({\varphi _{KKR}}(z^k_{X^k}))\\
			& = \max ({\varphi _{KKR}}(z^k_{X^k})_1,{\varphi _{KKR}}(z^k_{X^k})_2,\\
			& \; \; \; \; \; ......,{\varphi _{KKR}}(z^k_{X^k})_C)\\
			&  = {\varphi _{KKR}}(z^k_{X^k})_m\\
			&  = \frac{{(CT - 1) \cdot v_m^k + v_m^k - T}}{{C \cdot v_m^k - 1}}\\
			&  = T.
		\end{aligned}
	\end{equation}
	\textbf{Case 6.2.}\\
	When ${\varphi _{KKR}}(z^k_{X^k})_j < 0,\exists i \in \mathcal{C}$, 
	we can conduct the following inference based on Eq. (\ref{t-greater}):
	\begin{equation}
	\begin{aligned}
			& \; \; \; \; \max ({\varphi _{KKR}}(z^k_{X^k}))\\
			& =\max (T,\frac{{1 - T}}{{C - 1}})\\
			&  = T.
	\end{aligned}
	\end{equation}
	So far, for $\forall {k_1},{k_2} \in \mathcal{K}$, we have:
	\begin{equation}
		\begin{aligned}
			& \; \; \; \; \; |dis{t_{KKR}}(\varphi_{KKR} (z_X^{{k_1}})) - dis{t_{KKR}}(\varphi_{KKR} (z_X^{{k_2}}))|\\
			& = |\max (\varphi_{KKR} (z_X^{{k_1}})) - \max (\varphi_{KKR} (z_X^{{k_2}}))|\\
			& = |T - T|\\
			& = 0\\
			& < \varepsilon .
		\end{aligned}
	\end{equation}

	Theorem \ref{peak} is proved.
\end{proof}

\begin{theorem}
    \label{dist-sh}
	\textbf{After refining by SKR, the knowledge discrepancy between arbitrating clients should satisfy an acceptable theoretical upper bound $\bm{\varepsilon}$.}
\end{theorem}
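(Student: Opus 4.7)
The plan is to leverage the triangle inequality together with the tolerance guarantee that was built into the root-searching formulation of SKR. By construction in Eq.~(\ref{search}), for every client $k \in \mathcal{K}$ the optimal parameter $\theta^*$ is obtained such that $|H(\psi(\theta^*; z^k_{X^k})) - E| < \frac{\varepsilon}{2}$, and by Theorem \ref{root} such a $\theta^*$ is always attainable via the Bisection method. Since $\varphi_{SKR}(z^k_{X^k}) = \psi(\theta^*; z^k_{X^k})$ from Eq.~(\ref{equ3}), this tolerance bound transfers directly to the refined-local knowledge of every client.

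First, I would instantiate the definition of $dist_{SKR}(\cdot) = H(\cdot)$ and write out the quantity to be bounded, namely $|dist_{SKR}(\varphi_{SKR}(z^k_{X^k})) - dist_{SKR}(\varphi_{SKR}(z^l_{X^l}))|$ for arbitrary $k, l \in \mathcal{K}$. Second, I would insert $\pm E$ into this expression and apply the triangle inequality:
\begin{equation}
\begin{aligned}
&|H(\varphi_{SKR}(z^k_{X^k})) - H(\varphi_{SKR}(z^l_{X^l}))| \\
&\le |H(\varphi_{SKR}(z^k_{X^k})) - E| + |E - H(\varphi_{SKR}(z^l_{X^l}))|.
\end{aligned}
\end{equation}
Third, I would apply the per-client tolerance guarantee to each summand, bounding each by $\frac{\varepsilon}{2}$, which yields the desired bound of $\varepsilon$.

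There is essentially no main obstacle here: the theorem is a direct corollary of how SKR was constructed, since the searching objective in Eq.~(\ref{search}) already forces every refined-local knowledge to lie within an entropy ball of radius $\frac{\varepsilon}{2}$ around the shared target $E$, and the triangle inequality closes the argument. The only subtlety to make explicit is that Theorem \ref{root} guarantees the feasibility of the searching step (so that a valid $\theta^*$ satisfying the constraint exists for every client's local knowledge), which ensures the tolerance bound is actually realized rather than merely being an aspirational constraint.
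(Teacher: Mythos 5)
Your proposal matches the paper's own proof essentially step for step: the paper likewise first notes that Theorem \ref{root} guarantees a valid $\theta^*$ (hence a realizable $\varphi_{SKR}$), then inserts $\pm E$ and applies the triangle inequality to bound the entropy gap by $\frac{\varepsilon}{2}+\frac{\varepsilon}{2}=\varepsilon$. The argument is correct and identical in approach.
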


\begin{proof}
	Since we cannot provide $\varphi_{SKR}(\cdot)$ directly, our demonstration is to follow two steps: 
	\begin{itemize}
	    \item [(a)]
	    To prove that $\varphi_{SKR}(\cdot)$ is able to be constructed according to section \ref{searching-based}.
	    \item [(b)]
	    To prove that the knowledge discrepancy between arbitrate clients should satisfy an acceptable theoretical upper bound $\varepsilon$ after refining the local knowledge by the available SKR.
	\end{itemize}
	To prove step (a), we should first search for an optimal $\theta^*$ in $\psi (\theta ; \cdot )$ just as mentioned in Eq. (\ref{search}) and Eq. (\ref{equ1}). Furthermore, our problem is converted into finding the root of Eq. (\ref{root-equ}), whose availability has been proved in Theorem \ref{proof-solve}.

    To prove step (b), we calculate the differences in knowledge distributions based on metric $dist_{SKR}(\cdot)$, in that for $\forall {{k}_1},{k_2} \in \mathcal{K}$,
	\begin{equation}
	    \begin{aligned}
& \; \; \; \; \;|dis{t_{SKR}}({\varphi _{SKR}}(z_X^{{k_1}})) - dis{t_{SKR}}({\varphi _{SKR}}(z_X^{{k_2}}))|\\
& = |H({\varphi _{SKR}}(z_X^{{k_1}})) - H({\varphi _{SKR}}(z_X^{{k_2}}))|\\
& = |(H({\varphi _{SKR}}(z_X^{{k_1}})) - E) - (H({\varphi _{SKR}}(z_X^{{k_2}})) - E)|\\
 & \le |(H({\varphi _{SKR}}(z_X^{{k_1}})) - E)| + |(H({\varphi _{SKR}}(z_X^{{k_2}})) - E)|\\
 &< \frac{\varepsilon }{2} + \frac{\varepsilon }{2}\\
 &= \varepsilon. \\
\end{aligned}
	\end{equation}
Theorem \ref{dist-sh} is proved.
\end{proof}

\subsection{Sufficient Conditions for Available Kernel functions in the KKR strategy}
\begin{theorem}
\label{general-kn}
\textbf{The constructed KKR can satisfy all properties mentioned in section \ref{constrains} as long as the kernel function $\bm{\sigma(\cdot)}$ satisfies the following relaxation conditions:
\begin{itemize}
    \label{pro-advance}
    \item[(a)]
    None-direct-proportion, i.e. $\sigma  \notin \{ f|f(x) = k \cdot x\}$
    \item[(b)]
	Continuous and monotonically increasing, i.e. $\sigma  \in \{ f|\mathop {\lim }\limits_{x \to c} f(x) = f(c)\}  \cap \{ f|f({x_1}) - f({x_2}) \ge 0,\forall {x_1} \ge {x_2}\}$
	\item[(c)]
	Function value is consistently positive, i.e. $\sigma  \in \{ f|f(x) > 0\}$
	\item[(d)]
	Parameter $\bm{t}$ is solvable in Eq. (\ref{pk}), i.e. $(\frac{{\sigma (\frac{1}{{{t_1}}})}}{{\sum\limits_{j = 1}^C {\sigma (\frac{{v_j^k}}{{{t_1} \cdot v_m^k}})} }} - T) \cdot (\frac{{\sigma (\frac{1}{{{t_2}}})}}{{\sum\limits_{j = 1}^C {\sigma (\frac{{v_j^k}}{{{t_2} \cdot v_m^k}})} }} - T) < 0,\exists {t_1},{t_2}$
\end{itemize}}
\end{theorem}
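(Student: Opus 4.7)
The plan is to verify the three properties from section \ref{constrains} --- Probabilistic Projectivity, Invariant Relations, and Bounded Dissimilarity --- in turn, using the four relaxation conditions as the sole hypotheses. First, I would establish existence of a valid scale parameter $t$: condition (d) gives two points $t_1, t_2$ at which the function
$$
h(t) := \frac{\sigma(1/t)}{\sum_{j=1}^C \sigma(v_j^k/(t \cdot v_m^k))} - T
$$
changes sign, and since $\sigma$ is continuous by (b) while the denominator is strictly positive by (c), the map $h$ is continuous on the relevant interval. The intermediate value theorem then delivers a root of $h$, i.e., a solution to Eq. (\ref{pk}). Condition (a) is what makes this nontrivial: if $\sigma$ were directly proportional, $h(t)$ would reduce to $v_m^k - T$ independently of $t$, so condition (d) could never be satisfied unless $v_m^k = T$ by accident.

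Second, for Probabilistic Projectivity, condition (c) forces every $\sigma(v_i^k/(t \cdot v_m^k)) > 0$, so each $\varphi_{KKR}(z_{X^k}^k)_i$ is strictly positive; the normalization by the sum yields $\sum_i \varphi_{KKR}(z_{X^k}^k)_i = 1$, and each entry is bounded above by $1$ because the remaining $C-1$ positive terms stay in the denominator. Thus $\varphi_{KKR}(z_{X^k}^k) \in \mathcal{P}$, and no rectification step like the one required in Theorem \ref{kn-less-zero} is needed here.

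For Invariant Relations, I would work with a positive root $t > 0$ of $h$ (one can always select such a root by restricting the sign change interval in condition (d) to $\mathbb{R}^+$, which is the empirically relevant regime given $T > 1/C$). Given $v_i^k \geq v_j^k$, positivity of $t \cdot v_m^k$ gives $v_i^k/(t \cdot v_m^k) \geq v_j^k/(t \cdot v_m^k)$, and monotonicity of $\sigma$ from condition (b) yields $\sigma(v_i^k/(t \cdot v_m^k)) \geq \sigma(v_j^k/(t \cdot v_m^k))$. Dividing both sides by the common positive normalizer preserves the inequality, and composing with the softmax monotonicity already noted in the proof of Theorem \ref{thm-3} lifts the ordering back to the logits $u_i^k$.

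Finally, for Bounded Dissimilarity, Invariant Relations identifies $\varphi_{KKR}(z_{X^k}^k)_m$ as the maximum entry, and the defining equation for $t$ gives exactly
$$
\max \varphi_{KKR}(z_{X^k}^k) = \frac{\sigma(1/t)}{\sum_{j=1}^C \sigma(v_j^k/(t \cdot v_m^k))} = T
$$
for every client $k$. Hence $|dist_{KKR}(\varphi_{KKR}(z_{X^{k_1}}^{k_1})) - dist_{KKR}(\varphi_{KKR}(z_{X^{k_2}}^{k_2}))| = 0 < \varepsilon$ for all $k_1, k_2 \in \mathcal{K}$. The main obstacle is the sign-and-selection argument in the Invariant Relations step: a negative root of $h$ would reverse the ordering under $\sigma$ and break structural preservation, so one must either restrict condition (d) to the positive half-line or verify post hoc that the selected root lies there; the other steps are essentially direct rewrites of the linear-kernel proofs (Theorems \ref{thm-1}, \ref{thm-3}, \ref{peak}) with $\sigma$ in place of the explicit closed form.
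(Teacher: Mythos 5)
Your overall route is the same as the paper's: establish that a valid scale parameter $t$ exists and is positive, then verify Probabilistic Projectivity from condition (c), Invariant Relations from monotonicity of $\sigma$ together with $t\cdot v_m^k>0$, and Bounded Dissimilarity from the defining equation forcing the peak to equal $T$ for every client. Your observation that condition (a) is what prevents $h(t)$ from degenerating to the constant $v_m^k-T$ is correct and is a nice justification for that hypothesis, even though the paper does not spell it out.

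The one place you stop short is exactly the step you flag as "the main obstacle": you propose to either restrict condition (d) to the positive half-line or "verify post hoc" that the selected root is positive, but you do not supply that verification. The paper closes this with a short lemma showing that \emph{no} $t\le 0$ can solve Eq.~(\ref{pk}), so positivity is automatic and no modification of the hypotheses is needed. The argument: $t=0$ is excluded as a denominator, and for $t<0$ the inequality $v_j^k\le v_m^k$ gives $v_j^k/(t\cdot v_m^k)\ge 1/t$, so monotonicity of $\sigma$ yields $\sigma\bigl(v_j^k/(t\cdot v_m^k)\bigr)\ge\sigma(1/t)$ for every $j$, hence
\begin{equation*}
\frac{\sigma(\tfrac{1}{t})}{\sum_{j=1}^{C}\sigma\bigl(\tfrac{v_j^k}{t\cdot v_m^k}\bigr)}\le\frac{\sigma(\tfrac{1}{t})}{C\cdot\sigma(\tfrac{1}{t})}=\frac{1}{C}<T,
\end{equation*}
using $T>1/C$ from Eq.~(\ref{T-exp}). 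So the left-hand side of Eq.~(\ref{pk}) can never reach $T$ for $t<0$, and any root delivered by condition (d) (via the intermediate value theorem, as you argue) necessarily lies in $\mathbb{R}^+$. With that lemma in hand, your Invariant Relations and Bounded Dissimilarity steps go through exactly as you wrote them; adding this argument would make your proof complete and essentially identical to the paper's.
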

\begin{proof}
To prove the necessary properties in section \ref{constrains}, we first introduce a lemma to confirm that the kernel function scaling parameter $t$ is consistently positive.\\
\textbf{Lemma.}
\textbf{\textit{When the kernel function satisfies the relaxation conditions mentioned in Theorem \ref{pro-advance}, $\bm{t}$ is consistently positive.}}\\
\textit{Proof of Lemma. }
We first claim that $t \ne 0$ as an denominator in Eq. (\ref{pk}).
Then we prove that $t<0$ can never hold. According to Eq. (\ref{T-exp}) and condition (b), we can infer that:
\begin{equation}
\begin{aligned}
& \; \; \; \; \; \frac{{\sigma (\frac{1}{t})}}{{\sum\limits_{j = 1}^C {\sigma (\frac{{v_j^k}}{{t \cdot v_m^k}})} }}\\
 &< \frac{{\sigma (\frac{1}{t})}}{{\sum\limits_{j = 1}^C {\sigma (\frac{{v_j^k}}{{t \cdot v_j^k}})} }}\\
 &= \frac{{\sigma (\frac{1}{t})}}{{C \cdot \sigma (\frac{1}{t})}}\\
 &= \frac{1}{C}\\
 &< T,
\end{aligned}
\end{equation}
which indicates:
\begin{equation}
\label{ineqq}
    \frac{{\sigma (\frac{1}{t})}}{{\sum\limits_{j = 1}^C {\sigma (\frac{{v_j^k}}{{t \cdot v_m^k}})} }} \ne T,
\end{equation}
and Eq. (\ref{ineqq}) is in conflict with Eq. (\ref{pk}). Hence, we can never take $t \le 0$ when relaxation conditions in Theorem \ref{pro-advance} satisfy. While condition (d) indicates that we can always solve out a $t$, there should always be $t>0$.
\\
Lemma is proved.\\
So far, we begin to prove the necessary properties mentioned in section \ref{constrains}.

\textit{\textbf{1) Probabilistic Projectivity:}}
As stated in condition (b), i.e. $\sigma (x) > 0,\forall x \in R$, hence, we have:
\begin{equation}
{\varphi _{KKR}}(z^k_{X^k})_i = \frac{{\sigma (\frac{{v_i^k}}{{t \cdot v_m^k}})}}{{\sum\limits_{j = 1}^C {\sigma (\frac{{v_j^k}}{{t \cdot v_m^k}})} }} > 0.
\end{equation}

What is more, 
\begin{equation}
\begin{aligned}
& \; \; \; \; \sum\limits_{i = 1}^C {{\varphi _{KKR}}(z^k_{X^k})_i} \\
& = \sum\limits_{i = 1}^C {\frac{{\sigma (\frac{{v_i^k}}{{t \cdot v_m^k}})}}{{\sum\limits_{j = 1}^C {\sigma (\frac{{v_j^k}}{{t \cdot v_m^k}})} }}} \\
& = 1.
\end{aligned}
\end{equation}
Hence, we prove ${\varphi _{KKR}}(z^k_{X^k}) \in \mathcal{P}$.

\textit{\textbf{2) Invariant Relations:}}
As $v_i^k \ge v_j^k$, $t>0$ and $v^k_m>0$, we can infer that:
\begin{equation}
    \frac{{v_i^k}}{{t \cdot v_m^k}} \ge \frac{{v_j^k}}{{t \cdot v_m^k}}.
\end{equation}
 Consequently, we have:
\begin{equation}
\begin{aligned}
& \; \; \; \; \; \forall v_i^k \ge v_j^k,\\
& \; \; \; \; \; {\varphi _{KKR}}(z^k_{X^k})_i - {\varphi _{KKR}}(z^k_{X^k})_j\\
& = \frac{{\sigma (\frac{{v_i^k}}{{t \cdot v_m^k}}) - \sigma (\frac{{v_j^k}}{{t \cdot v_m^k}})}}{{\sum\limits_{j = 1}^C {\sigma (\frac{{v_j^k}}{{t \cdot v_m^k}})} }}\\
& \ge 0.
\end{aligned}
\end{equation}
Referencing to the process in proving Eq. (\ref{soft-relation}), we can summarize that:
	\begin{equation}
	    {\varphi _{KKR}}(z_{{X^k}}^k)_i \ge {\varphi _{KKR}}(z_{{X^k}}^k)_j,\forall u_i^k \ge u_j^k.
	\end{equation}

	\textit{\textbf{{3) Bounded Dissimilarity:}}}
Based on the property invariant relations \textit{\textbf{2)}} in Theroem \ref{general-kn}, the proven detail is just the same as Eq. (\ref{rkdc}).\\
Theroem \ref{general-kn} is proved.
\end{proof}
\end{document}